%% file: main.tex
\documentclass[a4paper,11pt]{article}

\usepackage{publication}
\usepackage{amssymb}
\usepackage{amsthm}
\usepackage{booktabs}
\usepackage[ruled,vlined]{algorithm2e}
\usepackage{enumitem}
\usepackage[backend=biber]{biblatex}
\newtheorem{proposition}{Proposition}
\newtheorem{definition}{Definition}
\newtheorem{remark}{Remark}

\title{Mixed Integer Goal Programming for Personalized Meal Optimization with User-Defined Serving Granularity}

\author{Francisco Aguilera Moreno}
\contact{aguimorefran@gmail.com}

\keywords{Diet optimization, Mixed integer programming, Goal programming, Meal planning, Operations research}
\date{March 2026}

\begin{document}

\maketitle

\begin{abstract}
    \input{sections/abstract}
\end{abstract}

\input{sections/introduction}
\input{sections/related_work}
\input{sections/formulation}
\input{sections/integrality_analysis}
\input{sections/implementation}
\input{sections/experiments}
\input{sections/discussion}

\printbibliography

\end{document}

%% file: sections/abstract.tex
Determining what to eat to satisfy nutritional requirements is one of the
oldest optimization problems in operations research, yet existing formulations
have two persistent limitations: continuous variables produce impractical
fractional servings (1.7~eggs, 0.37~bananas), and hard nutrient constraints
cause infeasibility when targets conflict. A systematic review of 56 diet
optimization papers found that none combine integer programming with goal
programming to address both issues.

We propose Mixed Integer Goal Programming (MIGP) for personalized meal
optimization. The formulation uses integer variables for practical serving
counts and goal programming deviations for soft nutrient targets, with
inverse-target normalization to balance multi-nutrient optimization.
Per-food serving granularity allows natural units (one egg, one tablespoon
of oil) without post-hoc rounding.

We characterize the integrality gap in the goal programming context and
identify a \emph{deviation absorption} property: GP deviation variables
buffer the cost of requiring integer servings, making the gap structurally
smaller than in hard-constraint MIP. For meals with 15+ foods, the integer
solution matches the continuous optimum in every benchmark instance.

A computational evaluation across 810 instances (30 USDA foods,
9~configurations, 3~methods) shows MIGP finds strictly better solutions
than GP with post-hoc rounding in 66\% of cases (never worse) while
maintaining 100\% feasibility; hard-constraint IP achieves only 48\%.
Solve times stay under 100\,ms for typical meal sizes using the open-source
HiGHS solver. The implementation is available as an open-source Python
module integrated into an interactive meal planning application.

%% file: sections/introduction.tex
\section{Introduction}
\label{sec:introduction}

\subsection{The Diet Problem}

Determining what to eat to satisfy nutritional requirements is one of
the oldest optimization problems in operations research.
Stigler~\cite{stigler1945cost} formulated the minimum-cost diet problem
in 1945; Dantzig~\cite{dantzig1963linear} solved it with the simplex
method shortly after. Nearly 80 years later, the problem remains
practically relevant: health tracking apps, meal prep services, and
clinical nutrition programs all need to compose meals that meet
individualized nutritional targets.

The mathematical structure generalizes beyond nutrition.
\emph{Resource-composition problems} arise whenever discrete quantities
of inputs must be combined to approximate a target composition vector:
blending chemicals in manufacturing, constructing a portfolio from
integer share lots, or loading vehicles to balance weight across
dimensions. In each case, inputs come in discrete units, targets span
multiple dimensions, and exact achievement is typically impossible.

\subsection{Two Persistent Limitations}

Despite decades of refinement, diet optimization faces two limitations
rooted in the standard LP formulation.

\paragraph{Fractional servings.}
Continuous variables produce solutions like 1.7~eggs or
0.37~bananas, mathematically precise but useless for meal preparation.
The literature has largely treated this as an implementation detail,
rounding to integers after the fact~\cite{gazan2018mathematical}. But
rounding is not neutral: it changes the nutritional profile without
regard for optimality, and the rounded solution may violate the very
constraints that motivated the optimization.

Integer programming solves this directly. However, existing IP
formulations for diet use binary selection variables (include/exclude a
food)~\cite{benvenuti2024designing} rather than integer quantity
variables (how many servings). The question of \emph{how much} of each
food (expressed in whole servings) has not been addressed with integer
optimization.

\paragraph{Hard constraints cause infeasibility.}
When nutrient requirements conflict or the food set is limited,
hard-constraint formulations declare the problem infeasible and return
nothing. Briend et al.~\cite{briend2003linear} documented this in the
WHO context, where iron requirements could only be met at 63\% of
recommendations. In interactive applications this is problematic: the system provides no
guidance precisely when guidance is most needed.

Goal programming~\cite{charnes1961management} eliminates infeasibility
by replacing hard constraints with soft targets. Deviation variables
measure how far the solution falls from each target, and the objective
minimizes total deviation. Gerdessen and de
Vries~\cite{gerdessen2015diet} applied GP to diet optimization with
good results, but their formulation uses continuous variables, inheriting
the fractional serving problem.

\subsection{The Gap}

A systematic review by Donkor et al.~\cite{donkor2023systematic}
examined 56 diet optimization papers and found that none combine integer
programming with goal programming (detailed analysis in
Section~\ref{sec:related}). Concurrently, AI/ML approaches have emerged
for meal planning~\cite{nutrigen2025, cfrl2024, vanwonderen2026diet},
offering preference learning and generative capabilities but without
formal optimality guarantees or transparent tradeoff reporting. These
approaches are complementary to exact optimization.

\subsection{Contributions}

We propose \emph{Mixed Integer Goal Programming} (MIGP) for
personalized meal optimization, filling the gap identified by Donkor
et al. Our contributions:

\begin{enumerate}
\item \textbf{Novel formulation.} The first unified MIGP model for
  diet optimization, combining integer serving variables with GP
  deviation minimization. The model supports user-defined per-food
  serving granularity and inverse-target penalty normalization for
  balanced multi-nutrient optimization
  (Section~\ref{sec:formulation}).

\item \textbf{Integrality analysis.} We characterize the integrality
  gap in the GP context, identifying a \emph{deviation absorption}
  property where deviation variables buffer the cost of integrality.
  With 15+ foods, the integer solution matches the continuous optimum.
  This property generalizes to any GP formulation with integer
  variables, not just diet optimization
  (Section~\ref{sec:integrality}).

\item \textbf{Open-source implementation.} A Python solver using HiGHS
  achieves sub-100\,ms solve times for typical meal sizes, integrated
  into an interactive web application
  (Section~\ref{sec:implementation}).

\item \textbf{Comparative evaluation.} A benchmark across 810 instances
  shows MIGP finds strictly better solutions than GP+rounding in 66\%
  of cases (never worse) while maintaining 100\% feasibility; hard-constraint
  IP achieves only 48\% (Section~\ref{sec:experiments}).
\end{enumerate}

\subsection{Paper Organization}

Section~\ref{sec:related} reviews related work.
Section~\ref{sec:formulation} presents the MIGP formulation.
Section~\ref{sec:integrality} analyzes the integrality gap.
Section~\ref{sec:implementation} describes the implementation.
Section~\ref{sec:experiments} reports computational experiments.
Section~\ref{sec:discussion} discusses implications, limitations, and
future directions.

%% file: sections/related_work.tex
\section{Related Work}
\label{sec:related}

\subsection{Classical Diet Optimization}

The diet problem is one of the earliest applications of mathematical
programming. Stigler~\cite{stigler1945cost} formulated the problem of
finding a minimum-cost diet satisfying nutritional requirements for
77~foods and 9~nutrients, solving it by manual enumeration.
Dantzig~\cite{dantzig1963linear} later applied the simplex method to this
problem, establishing diet optimization as a canonical linear programming
application.

Subsequent work refined the LP approach for specific populations.
Briend et al.~\cite{briend2003linear} applied LP to complementary feeding
for infants in a WHO context, demonstrating a fundamental limitation:
when iron requirements could only be met at 63\% of recommendations, the
model declared infeasibility rather than providing a best-effort solution.
Maillot et al.~\cite{maillot2010individual} developed individual diet
modeling that translates nutrient recommendations into personalized food
choices, and van Dooren et al.~\cite{vandooren2014exploring,
vandooren2015combining} extended the framework to incorporate
sustainability criteria (cost and climate impact) alongside nutrition.

Throughout this evolution, the mathematical structure remained the same:
continuous LP with hard nutrient constraints. Two persistent limitations
emerge. First, continuous variables produce impractical fractional
servings (0.37~bananas, 1.7~eggs) that require manual rounding. Second,
hard constraints cause infeasibility when nutrient requirements conflict
or when the available food set cannot meet all targets simultaneously.

\subsection{Goal Programming for Diet}

Goal programming, introduced by Charnes and
Cooper~\cite{charnes1961management}, replaces hard constraints with soft
targets and explicit deviation variables, eliminating the infeasibility
problem. Tamiz et al.~\cite{tamiz1998goal} provide an overview of GP
methodology, and Romero~\cite{romero2004general} formalizes the general
achievement function structure.

Gerdessen and de Vries~\cite{gerdessen2015diet} applied GP to diet
optimization with 144 Dutch foods and 19 nutrients, comparing three
achievement functions: MinSum (weighted $L_1$), MinMax ($L_\infty$), and
Extended GP (combining both). Their key finding is that the choice of
achievement function significantly impacts diet composition: MinSum
favors well-rounded meals while MinMax prevents extreme single-nutrient
deviations. However, their formulation uses \emph{continuous} decision
variables, producing fractional food quantities that are impractical for
meal preparation.

Our MIGP formulation builds directly on Gerdessen and de Vries' GP
framework, adopting the MinSum achievement function while adding integer
decision variables for practical serving counts. The integrality analysis
in Section~\ref{sec:integrality} quantifies the cost of this extension.

\subsection{Integer Programming for Diet}

Integer programming in diet optimization has focused primarily on binary
selection variables (include/exclude a food) rather than integer quantity
variables. Benvenuti et al.~\cite{benvenuti2024designing} formulate a
triobjective 0--1 integer program for school cafeteria menus, using binary
variables with hard nutritional constraints. Their TrIntOpt algorithm
enumerates Pareto-optimal menus, but the hard-constraint formulation
suffers from infeasibility when objectives conflict.

Gazan et al.~\cite{gazan2018mathematical} review 67 studies using LP for
diet optimization and note that integer LP is ``more pragmatic'' for
real-world application but ``computationally intensive.'' They document the
common practice of Optifood and similar tools: solve a continuous LP, then
round the solution post-hoc. This GP+rounding approach is the baseline we
evaluate against in Section~\ref{sec:experiments}.

The gap between the GP literature (continuous, always feasible, soft
targets) and the IP literature (integer, frequently infeasible, hard
constraints) motivates our formulation: MIGP combines the strengths of
both paradigms.

\subsection{AI and Machine Learning Approaches}

Recent work applies AI/ML techniques to meal planning.
Khamesian et al.~\cite{nutrigen2025} use large language models (NutriGen),
Amiri et al.~\cite{cfrl2024} combine reinforcement learning with
collaborative filtering, and van Wonderen et
al.~\cite{vanwonderen2026diet} pair recipe completion algorithms with
diet optimization. These approaches excel at learning user preferences
and generating novel food combinations, but share two limitations
relative to mathematical programming: (1)~no formal guarantee of
optimality---the returned meal may not minimize deviation from nutrient
targets---and (2)~no guarantee of constraint satisfaction---hard bounds
on nutrients or servings may be violated silently. MIGP and AI/ML are
therefore complementary: MIGP provides optimality, feasibility, and
transparency of tradeoffs, while ML provides capabilities that MIGP does
not.

\subsection{Structural Analogies}

The MIGP formulation belongs to a broader class of resource-composition
problems where discrete quantities of input resources must approximate a
target composition vector.

The \emph{multi-dimensional knapsack problem}~\cite{kellerer2004knapsack}
selects integer quantities subject to capacity constraints in multiple
dimensions. Diet MIGP has the same structure: select integer servings
subject to soft nutrient targets across multiple macronutrients.
Concretely, choosing how many bags of each fertilizer to buy to
approximate a target NPK (nitrogen-phosphorus-potassium) ratio is
structurally identical to choosing serving counts to approximate
macro targets. The knapsack is NP-hard in general, though the small
problem sizes in meal planning (8--25 foods) keep computation tractable.

In \emph{manufacturing blending} (coal, tea, gasoline, animal feed),
quantities of raw ingredients are combined to achieve a target composition.
For example, blending integer crates of different tea varieties to achieve
a target caffeine-to-tannin ratio parallels blending integer food servings
to achieve a target protein-to-carb ratio. This is a classical GP
application~\cite{romero2004general}; our formulation adds integrality
when raw ingredients come in discrete units.

\emph{Portfolio selection with cardinality
constraints}~\cite{bonami2009exact} requires integer numbers of shares
subject to multi-objective targets (return, risk). The mathematical
structure (integer quantities of discrete resources meeting soft
composition targets) is identical to diet MIGP. Deviation
absorption applies here as well: soft return and risk targets with GP
deviation variables would buffer the cost of integer share lots, just as
deviation variables buffer the cost of integer servings in our formulation.

A key insight from our integrality analysis is that the deviation
absorption property (Section~\ref{sec:absorption}) is not specific to
diet optimization. Any GP formulation with integer decision variables
will exhibit this behavior, suggesting that the integrality gap in GP
contexts is structurally smaller than in hard-constraint MIP across all
these domains. We are not aware of prior work characterizing this
phenomenon.

\subsection{The Gap}

Donkor et al.~\cite{donkor2023systematic} provide the most comprehensive
recent review, examining 56 diet optimization papers and classifying them
by mathematical approach. Their taxonomy reveals that 54 of 56 papers use
standard single-objective LP; Gerdessen and de
Vries~\cite{gerdessen2015diet} use GP but with continuous variables;
Benvenuti et al.~\cite{benvenuti2024designing} use IP but with binary
variables and hard constraints. No prior work combines integer
programming with goal programming. Donkor et al.\ explicitly call for
``new mathematical approaches'' to address the limitations of existing
methods.

Our independent search of the 2024--2026 literature (50+ additional
papers) confirms this gap remains open. Bashiri et
al.~\cite{bashiri2025sustainable} combine multi-objective optimization
with multi-criteria decision making for sustainable diets, but do not use
goal programming or integer serving variables. The AI/ML approaches
reviewed above operate in a different paradigm. The MIGP
formulation presented in this work is, to our knowledge, the first to
fill the gap identified by Donkor et al.

Table~\ref{tab:methods} summarizes the positioning of MIGP relative to
existing approaches.

\begin{table}[H]
\centering
\small
\begin{tabular}{@{}l c c c c c@{}}
\toprule
\textbf{Approach} & \textbf{Variables} & \textbf{Constraints} & \textbf{Feasible} & \textbf{Servings} & \textbf{Optimal} \\
\midrule
Classical LP    & Continuous & Hard  & No  & Fractional & Yes \\
GP (Gerdessen)  & Continuous & Soft  & Yes & Fractional & Yes \\
IP (Benvenuti)  & Binary     & Hard  & No  & Binary     & Yes \\
AI/ML           & ---        & Soft  & Yes & Varies     & No  \\
\textbf{MIGP (ours)} & \textbf{Integer} & \textbf{Soft} & \textbf{Yes} & \textbf{Integer} & \textbf{Yes} \\
\bottomrule
\end{tabular}
\caption{Comparison of diet optimization approaches. MIGP uniquely
combines integer serving variables with soft goal constraints, achieving
both feasibility and optimality with practical serving outputs.}
\label{tab:methods}
\end{table}

%% file: sections/formulation.tex
\section{Mathematical Formulation}
\label{sec:formulation}

\subsection{Problem Definition}

Given a set of foods with known nutritional composition and user-defined
serving sizes, the meal optimization problem seeks integer serving counts
that minimize deviation from macronutrient targets. Unlike classical diet
formulations that impose hard constraints on nutrient
intake~\cite{stigler1945cost, dantzig1963linear}, we adopt a goal
programming approach where each target becomes a soft goal with explicit
deviation variables~\cite{charnes1961management}. Combined with integer
decision variables for practical serving counts, this yields a Mixed
Integer Goal Program (MIGP).

\subsection{Notation}

Table~\ref{tab:notation} summarizes the notation used throughout.

\begin{table}[H]
\centering
\begin{tabular}{@{}l l@{}}
\toprule
\textbf{Symbol} & \textbf{Description} \\
\midrule
\multicolumn{2}{@{}l}{\textit{Sets}} \\
$\mathcal{F} = \{1, \ldots, n\}$ & Selected foods \\
$\mathcal{M} = \{\text{cal, prot, carbs, fat}\}$ & Macronutrients \\
\midrule
\multicolumn{2}{@{}l}{\textit{Parameters}} \\
$a_{i,m}$ & Nutrient $m$ per 100\,g of food $i$ (kcal or grams) \\
$s_i$ & Serving size in grams for food $i$ \\
$\ell_i, u_i$ & Minimum and maximum integer servings for food $i$ \\
$T_m$ & Target value for macronutrient $m$ \\
$w_m$ & Penalty weight for macronutrient $m$ \\
\midrule
\multicolumn{2}{@{}l}{\textit{Derived}} \\
$c_{i,m} = a_{i,m} \cdot s_i \,/\, 100$ & Nutrient $m$ per serving of food $i$ \\
\midrule
\multicolumn{2}{@{}l}{\textit{Decision variables}} \\
$x_i \in \mathbb{Z}_{\geq 0}$ & Integer serving count for food $i$ \\
$d_m^+, d_m^- \geq 0$ & Over- and under-deviation for macronutrient $m$ \\
\bottomrule
\end{tabular}
\caption{Notation used in the MIGP formulation.}
\label{tab:notation}
\end{table}

The user selects foods from a nutritional database where values are
stored per 100\,g. Each food has a configurable serving size $s_i$
(e.g., 60\,g for an egg, 150\,g for a chicken breast), so granularity
reflects how foods are actually consumed. Serving bounds let the user
enforce preferences like ``at least 1 serving of rice'' or ``at most
3 eggs.''

\subsection{Target Derivation}

The user specifies a calorie target and macronutrient percentages
(protein, carbs, fat) that sum to~100\%.
Gram targets follow from standard energy densities (4\,kcal/g for
protein and carbohydrates, 9\,kcal/g for fat):
\begin{align}
T_{\text{prot}} &= \frac{T_{\text{cal}} \cdot p_{\text{prot}}}{100 \times 4}
\label{eq:target_prot} \\
T_{\text{carbs}} &= \frac{T_{\text{cal}} \cdot p_{\text{carbs}}}{100 \times 4}
\label{eq:target_carbs} \\
T_{\text{fat}} &= \frac{T_{\text{cal}} \cdot p_{\text{fat}}}{100 \times 9}
\label{eq:target_fat}
\end{align}

A 600\,kcal target with a 30/45/25 split gives 45\,g protein,
67.5\,g carbs, and 16.7\,g fat. The calorie target enters directly
as 600\,kcal.

\subsection{Goal Constraints}

Each macronutrient has a goal constraint linking food servings to the
target through deviation variables:
\begin{equation}
\sum_{i \in \mathcal{F}} c_{i,m} \, x_i \;+\; d_m^- \;-\; d_m^+ = T_m,
\quad \forall\, m \in \mathcal{M}
\label{eq:goal}
\end{equation}
The variable $d_m^-$ captures any shortfall below the target; $d_m^+$
captures any excess above it. At optimality, at most one is positive for
each macronutrient. Having both positive would mean we could reduce them
without worsening anything.

This follows the standard goal programming paradigm of Charnes and
Cooper~\cite{charnes1961management}: hard constraints become soft targets
with explicit deviation measurement. Feasibility implications are
established in Proposition~\ref{prop:feasibility} below.

\subsection{Objective Function}

The objective minimizes the weighted sum of all deviations, the MinSum
achievement function~\cite{romero2004general}:
\begin{equation}
\min \sum_{m \in \mathcal{M}} w_m \left( d_m^+ + d_m^- \right)
\label{eq:objective}
\end{equation}

Both over- and under-achievement are penalized equally: eating 20\,g
above or below a protein target incurs the same cost. This reflects
the meal planning context where exceeding a calorie target is as
undesirable as falling short. Alternative achievement functions (MinMax,
Extended GP) are compared by Gerdessen and de
Vries~\cite{gerdessen2015diet}; we adopt MinSum for simplicity.

\subsection{Penalty Weight Normalization}
\label{sec:weights}

Macronutrient targets span different scales: calorie targets are
typically 500--2000\,kcal, while fat targets may be 15--50\,g. Without
normalization, calorie deviations dominate the objective simply because
of their larger absolute magnitude.

We use inverse-target normalization:
\begin{equation}
w_m = \frac{1}{\max(T_m,\, 1)}
\label{eq:weight}
\end{equation}

This converts each deviation into a fraction of its target before
summing. A 10\,kcal miss on a 2000\,kcal target (0.5\%) gets the
same penalty as a 0.5\,g miss on a 100\,g protein target (also~0.5\%).
The $\max(\cdot, 1)$ guard handles zero targets, such as zero-carb diets.

Equal weights ($w_m = 1$) are the obvious alternative, but they bias
toward calorie accuracy at the expense of macronutrient balance.
Gerdessen and de Vries~\cite{gerdessen2015diet} compare several
achievement functions but do not address the normalization problem
introduced by integer variables. Sensitivity to weight choices is
evaluated in Section~\ref{sec:experiments}.

Concretely: a 45\,g protein target yields a weight about 13$\times$
larger than a 600\,kcal calorie target, so a 1-gram protein error is
penalized 13 times more than a 1-kcal calorie error, reflecting the
smaller absolute scale of protein.

An alternative to weight tuning is \emph{lexicographic} goal
programming, which solves a sequence of single-objective problems: first
minimize deviation on the highest-priority macro (e.g., protein for a
strength athlete), then minimize the next macro subject to keeping the
first at its optimum, and so on. This eliminates weight choice entirely
but imposes a rigid hierarchy: no amount of improvement in a
lower-priority macro can compensate for any worsening of a higher-priority
one. The weighted MinSum approach adopted here provides a more flexible
tradeoff, and the sensitivity analysis in Section~\ref{sec:sensitivity}
shows that weight adjustments give effective user control without the
rigidity of a strict lexicographic ordering.

\subsection{Complete Formulation}

Combining the elements above:
\begin{align}
\min \quad & \sum_{m \in \mathcal{M}} w_m \left( d_m^+ + d_m^- \right)
\label{eq:migp_obj} \\[4pt]
\text{s.t.} \quad
& \sum_{i \in \mathcal{F}} c_{i,m} \, x_i + d_m^- - d_m^+ = T_m,
  & \forall\, m \in \mathcal{M}
\label{eq:migp_goal} \\
& \ell_i \leq x_i \leq u_i,
  & \forall\, i \in \mathcal{F}
\label{eq:migp_bounds} \\
& x_i \in \mathbb{Z}_{\geq 0},
  & \forall\, i \in \mathcal{F}
\label{eq:migp_int} \\
& d_m^+,\, d_m^- \geq 0,
  & \forall\, m \in \mathcal{M}
\label{eq:migp_dev}
\end{align}
where $c_{i,m} = a_{i,m} \cdot s_i / 100$ and
$w_m = 1 \,/\, \max(T_m, 1)$.

For a typical meal with $n$ foods and 4~macronutrients, the model has
$n$~integer variables (serving counts) plus 8~continuous variables
(deviations), and just 4~equality constraints. A 15-food meal produces
a 23-variable, 4-constraint problem, small for modern MIP solvers.
Ours solves these in under 50\,ms
(Section~\ref{sec:implementation}).

\subsection{Properties}

We establish two foundational results: a feasibility guarantee that
sets MIGP apart from hard-constraint formulations, and a formal
definition of the integrality gap analyzed in
Section~\ref{sec:integrality}.

\begin{proposition}[Feasibility Guarantee]
\label{prop:feasibility}
The MIGP~\eqref{eq:migp_obj}--\eqref{eq:migp_dev} is feasible for any
target vector~$T$ and any food set~$\mathcal{F}$ with at least one food
satisfying $u_i \geq \ell_i$.
\end{proposition}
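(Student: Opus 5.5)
The plan is to prove feasibility by writing down one explicit feasible point. The key structural fact is that the deviation variables $d_m^+, d_m^-$ appear only in the goal constraints~\eqref{eq:migp_goal}, and each appears with a free sign pattern. Once the integer serving vector $x$ has been fixed to any value satisfying \eqref{eq:migp_bounds}--\eqref{eq:migp_int}, every goal constraint can therefore be met by a suitable choice of deviations, whatever the target vector~$T$ is. The whole burden of the proof falls on the $x$-part, namely the per-food bounds and integrality; the nutrient side imposes no restriction at all.

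\textbf{Step 1: choose the servings.} The hypothesis $u_i \geq \ell_i$ is what makes each food's serving range nonempty. Implicitly, $\ell_i$ and $u_i$ are integers with $u_i \geq 0$, since they are integer serving bounds. For every food in $\mathcal{F}$ I set
\[
\bar x_i = \max(\ell_i, 0).
\]
This value is an integer, it is nonnegative, and it satisfies $\ell_i \le \bar x_i \le u_i$. If the model includes foods whose range is empty, those foods cannot take any feasible value. In that case I read the statement as applying to the foods whose range is nonempty, with the others removed from $\mathcal{F}$ beforehand, and I will state this reading explicitly. The hypothesis is then exactly what guarantees that the remaining set $\mathcal{F}$ is nonempty.

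\textbf{Step 2: absorb every discrepancy in the deviations.} For each macronutrient $m \in \mathcal{M}$, let $r_m = T_m - \sum_i c_{i,m}\bar x_i$ be the residual left by the chosen servings. I then set
\[
d_m^- = \max(r_m,0), \qquad d_m^+ = \max(-r_m,0).
\]
Both values are nonnegative, so \eqref{eq:migp_dev} holds. Their difference is $d_m^- - d_m^+ = r_m$, so \eqref{eq:migp_goal} holds with equality. Nothing about $T$ was used here: it may be zero, tiny, or impossible to reach with the available foods. This is the formal counterpart of the contrast with hard-constraint formulations drawn earlier in the paper.

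\textbf{Step 3 (optional): existence of an optimum.} I would also note that the optimum is attained, not merely that the feasible set is nonempty. The objective~\eqref{eq:migp_obj} is bounded below by $0$, because $w_m > 0$ and all deviations are nonnegative. The integer part ranges over a finite box whenever the $u_i$ are finite, so I fix $x$ and solve in the deviations. For fixed $x$ the minimum over deviations is exactly the pair constructed in Step~2, giving the value $\sum_m w_m |r_m(x)|$. Minimizing this over the finitely many admissible $x$ yields an optimum.

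The main obstacle is not any computation but pinning down the hypothesis precisely. As stated, ``at least one food satisfying $u_i \ge \ell_i$'' is too weak if some other food has an empty range. I would handle this by making the convention explicit: infeasible foods are either excluded, or the bounds are assumed consistent for all foods. I would also make explicit that $u_i \geq 0$, so that the range meets $\mathbb{Z}_{\ge 0}$.
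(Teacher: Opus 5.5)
Your proof is correct and is essentially the paper's argument: fix the servings at their lower bounds (you use $\max(\ell_i,0)$, the paper says ``all foods at their minimum serving'') and let $d_m^{+}, d_m^{-}$ be the negative and positive parts of the residual $T_m - \sum_i c_{i,m}\bar x_i$. Your caveat about the hypothesis is well founded. The paper's proof silently needs every food, not just one, to have a nonempty range meeting $\mathbb{Z}_{\geq 0}$, and nonemptiness of $\mathcal{F}$ itself is never actually used. Your optional Step~3 on attainment of the optimum goes slightly beyond what the paper proves.
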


\begin{proof}
For any integer serving counts within the
bounds~\eqref{eq:migp_bounds}--\eqref{eq:migp_int}, let $A_m$ be the
achieved value of macro~$m$. The deviations can always absorb the gap:
set $d_m^+ = \max(0,\, A_m - T_m)$ and $d_m^- = \max(0,\, T_m - A_m)$.
At least one valid assignment exists (e.g., all foods at their minimum
serving), so the model always admits a feasible solution.
\end{proof}

MIGP never fails to return a solution. Hard-constraint approaches may declare
a problem infeasible and return nothing. Any mismatch between the food
selection and the targets is absorbed into the deviation variables. In our
benchmark, hard-constraint IP returns nothing for 51.9\% of instances;
MIGP succeeds on 100\%
(Table~\ref{tab:comparison})~\cite{briend2003linear}.

\begin{definition}[Integrality Gap]
\label{def:gap}
Let $Z_{\text{MIP}}$ denote the optimal objective of the
MIGP~\eqref{eq:migp_obj}--\eqref{eq:migp_dev}, and $Z_{\text{LP}}$ the
optimal value of its continuous relaxation (replacing
constraint~\eqref{eq:migp_int} with $x_i \in \mathbb{R}_+$). The
\emph{integrality gap} is:
\begin{equation}
\gamma = \frac{Z_{\text{MIP}} - Z_{\text{LP}}}{Z_{\text{LP}}}
\label{eq:gap}
\end{equation}
when $Z_{\text{LP}} > 0$. If $Z_{\text{LP}} = 0$ (the continuous
relaxation hits all targets exactly), we define $\gamma = 0$ when
$Z_{\text{MIP}} = 0$ and report the absolute difference otherwise.
\end{definition}

The integrality gap measures the cost of requiring whole servings
instead of fractions. In meal planning terms, it tells you how much
worse your macros get because you need 2~eggs instead of 1.7. The
MIGP solver considers the downstream effect on \emph{all}
macronutrients when choosing integer serving counts, not just the
individual rounding of each food.

Section~\ref{sec:integrality} characterizes $\gamma$ across problem
configurations and compares MIGP against the common practice of
solving a continuous GP and rounding
afterward~\cite{gazan2018mathematical}.

%% file: sections/integrality_analysis.tex
\section{Integrality Analysis}
\label{sec:integrality}

The integrality gap $\gamma$ (Definition~\ref{def:gap}) measures the
price of requiring whole servings. This section characterizes when the
gap is large or small, identifies a structural property of goal
programming that makes it behave differently from standard MIP, and
compares MIGP against the common practice of solving a continuous GP
followed by post-hoc rounding.

All empirical results here come from the benchmark described in
Section~\ref{sec:experiments}: 270 MIGP instances across 9
configurations (3 sizes, 3 difficulty levels) with 30 random seeds
each, using a 30-food bank with USDA nutritional values.

\subsection{Continuous Relaxation}

The continuous relaxation drops the integrality
constraint~\eqref{eq:migp_int}, allowing fractional servings and
yielding a standard LP. This relaxed model is equivalent to the
continuous GP of Gerdessen and de Vries~\cite{gerdessen2015diet}. Its
optimal value provides a lower bound on the integer objective, since
allowing fractions can only help (or at worst tie).

Two regimes emerge:

\paragraph{Perfect continuous solution (zero LP deviation).}
The LP finds fractional servings that hit all macro targets exactly.
This happens in 61\% of instances (165/270), mostly in loose and tight
configurations (91\% and 92\%) where serving bounds are not restrictive.
In all 165 cases, the integer objective is positive: requiring whole
servings always introduces some deviation when the continuous optimum
involves fractions. The median integer objective is 0.024, small but
nonzero.

\paragraph{Imperfect continuous solution (positive LP deviation).}
Even with fractional servings, the targets cannot be met exactly. This
occurs in all ambitious configurations (every food forced to have at
least 1~serving) and a minority of small instances.

Among the 105 instances where the LP cannot achieve zero deviation,
the integer solver matches the continuous optimum exactly in
83\% of cases (87/105). Integrality imposes no
additional cost. This is a consequence of \emph{deviation absorption}
(Section~\ref{sec:absorption}): the deviation variables provide enough
flexibility to redistribute rounding costs without increasing the total
weighted deviation.
\label{rem:tight_bound}

\subsection{Gap Characterization}
\label{sec:gap_char}

\subsubsection{Problem Size Is the Dominant Factor}

Figure~\ref{fig:gap_vs_foods} shows the integrality cost across problem
sizes. Panel~(a) reports the absolute cost (integer minus continuous
objective); panel~(b) reports the percentage gap for instances with
positive LP deviation. The pattern is stark: positive gaps occur
\emph{exclusively} in small instances (8~foods). With 15 or more
foods, the gap is zero across all 180 instances regardless of
configuration.

\begin{figure}[H]
\centering
\includegraphics[width=0.85\textwidth]{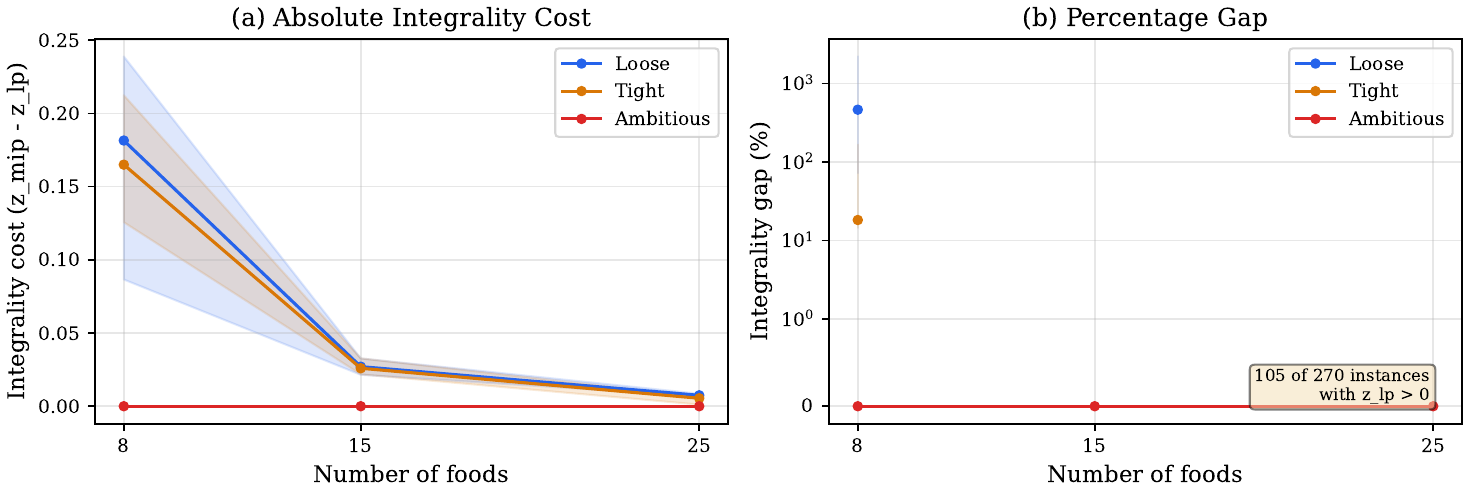}
\caption{Integrality cost vs.\ number of foods. \textbf{(a)}~Absolute
integrality cost (integer minus continuous objective) across all instances.
\textbf{(b)}~Percentage gap, computed only for instances where the LP
has positive deviation (annotation shows subsample size). Shaded
regions show interquartile ranges; dots indicate medians. Positive gaps
occur only with 8~foods; 15 and 25 food instances always achieve zero
integrality cost.}
\label{fig:gap_vs_foods}
\end{figure}

The intuition is simple: more foods mean more degrees of freedom. With
25~foods and only 4~macro constraints, the system is heavily
underdetermined: the solver can find integer combinations that match the
continuous optimum. With only 8~foods, the system is tighter and
rounding one food's servings may cascade into deviations across
multiple macros.

\subsubsection{Serving Bound Tightness Has Limited Effect}

Among configurations where positive gaps occur (small instances), the
serving range matters little. Loose bounds (0--10 servings) and tight
bounds (0--4) produce positive gaps in 27\% and 23\% of small instances
respectively, a negligible difference. The ambitious configuration
(1--3 servings required) rarely produces positive gaps despite its
narrow range, because targets are typically unachievable even
continuously and the LP bound is already tight.

\subsubsection{Extreme Percentage Gaps Are Misleading}

A few instances show very large percentage gaps (over~1000\%). These
occur when the LP deviation is near zero (e.g., 0.005), inflating the
denominator of Equation~\eqref{eq:gap}. The absolute rounding cost is
modest (below~0.25); the percentage is just an artifact of division by
a tiny number. For practical assessment, absolute deviation from
targets (Section~\ref{sec:experiments}) is more useful.

\subsection{Deviation Absorption in Goal Programming}
\label{sec:absorption}

The integrality gap in MIGP behaves differently from the gap in
standard MIP, and the reason is structural. In a standard MIP (e.g.,
the hard-constraint IP in our comparison), the objective depends
directly on the integer variables. Rounding worsens the objective with
no mechanism to compensate.

In MIGP, the deviation variables act as a \emph{buffer layer} between
the integer variables and the objective. When integrality forces a
serving count away from its continuous optimum, the deviations adjust
to absorb the change. The cost is not a direct loss but a
redistribution of deviations across macronutrients. This explains two
observations:

\begin{enumerate}
\item \textbf{Tight LP bounds.} In 83\% of instances with positive LP
deviation, the integer solver matches the continuous optimum exactly.
The deviations redistribute rounding costs across macros without
increasing the total. The MinSum objective allows trading a small
increase in one macro's deviation for a decrease in another.

\item \textbf{Rapid convergence with size.} More foods mean more
integer combinations and more ``adjustment room.'' The solver exploits
the many-to-many mapping between foods and macros: if rounding chicken
up by one serving adds excess protein, it can round lentils down to
compensate, with deviations absorbing the residual.
\end{enumerate}

We formalize this mechanism as follows.

\begin{proposition}[Deviation Absorption]
\label{prop:absorption}
Let $x^*_{\text{LP}}$ be an optimal continuous solution to the
relaxation of~\eqref{eq:migp_obj}--\eqref{eq:migp_dev}, with optimal
deviation vectors $d^{+*}, d^{-*}$. Let $\hat{x}$ be any integer-feasible
rounding of $x^*_{\text{LP}}$ (i.e., $\hat{x}_i \in \mathbb{Z}$,
$\ell_i \leq \hat{x}_i \leq u_i$), and let $\delta_m = \sum_i c_{i,m}
(\hat{x}_i - x^*_{\text{LP},i})$ be the per-macro rounding perturbation.
Then there exist non-negative $\hat{d}^+, \hat{d}^-$ satisfying the goal
constraints~\eqref{eq:migp_goal} at $\hat{x}$, with
\[
\sum_{m} w_m (\hat{d}_m^+ + \hat{d}_m^-)
\;\leq\;
Z_{\text{LP}} + \sum_{m} w_m |\delta_m|.
\]
When $|\mathcal{F}| > |\mathcal{M}|$, the goal constraints
are underdetermined in $x$, giving the solver multiple integer roundings
among which it can minimize $\sum_m w_m |\delta_m|$. The deviation
variables then redistribute residual costs via the triangle inequality:
$|d_m^+ - d_m^-| \leq |d^{+*}_m - d^{-*}_m| + |\delta_m|$.
\end{proposition}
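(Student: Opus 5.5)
The plan is a direct construction followed by the triangle inequality. No optimization argument is needed, because the bound compares one explicitly built feasible deviation pair against $Z_{\text{LP}}$.

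First, fix the LP optimum. Write $A^*_m = \sum_i c_{i,m} x^*_{\text{LP},i}$ for the achieved macro values, so the goal constraints~\eqref{eq:migp_goal} give $T_m - A^*_m = d^{+*}_m - d^{-*}_m$. We may assume, as noted after~\eqref{eq:goal}, that at most one of $d^{\pm *}_m$ is positive for each $m$. Otherwise reducing both by their minimum preserves feasibility and strictly lowers the objective, since $w_m > 0$ by~\eqref{eq:weight}; this contradicts optimality. Hence $d^{+*}_m + d^{-*}_m = |A^*_m - T_m|$ and $Z_{\text{LP}} = \sum_m w_m |A^*_m - T_m|$. (The sign in the statement's identity is immaterial because only absolute values enter.)

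Second, construct the deviations at $\hat{x}$. The achieved values there are $\hat{A}_m = A^*_m + \delta_m$ by linearity. As in the proof of Proposition~\ref{prop:feasibility}, set $\hat{d}^+_m = \max(0, \hat{A}_m - T_m)$ and $\hat{d}^-_m = \max(0, T_m - \hat{A}_m)$. These are nonnegative and satisfy $\hat{A}_m + \hat{d}^-_m - \hat{d}^+_m = T_m$. The bounds~\eqref{eq:migp_bounds} and integrality hold by the hypothesis on $\hat{x}$, so the point is MIGP-feasible. Moreover $\hat{d}^+_m + \hat{d}^-_m = |\hat{A}_m - T_m| = |(A^*_m - T_m) + \delta_m| \le |A^*_m - T_m| + |\delta_m|$. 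This is exactly the triangle inequality quoted in the statement, since $|\hat{d}^+_m - \hat{d}^-_m| = |\hat{A}_m - T_m|$. Multiplying by $w_m$ and summing gives $\sum_m w_m(\hat{d}^+_m + \hat{d}^-_m) \le Z_{\text{LP}} + \sum_m w_m |\delta_m|$.

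Third, handle the remark about $|\mathcal{F}| > |\mathcal{M}|$. This part is qualitative rather than a sharp claim. The $4 \times n$ matrix $(c_{i,m})$ has a kernel of dimension at least $n - 4 > 0$, so distinct choices of $\hat{x}$ (for example different floor/ceiling patterns) generally yield different perturbation vectors $\delta$. Since the bound holds for every rounding, the MIGP optimum satisfies $Z_{\text{MIP}} \le Z_{\text{LP}} + \min_{\hat{x}} \sum_m w_m |\delta_m|$, the minimum taken over integer-feasible roundings. This is the precise sense in which the solver ``minimizes over roundings.''

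The main obstacle is this third step. Making the underdetermination remark quantitative, e.g.\ showing the minimum perturbation shrinks as $n$ grows, would require number-theoretic or discrepancy-type arguments on the lattice generated by the columns of $(c_{i,m})$. I would not attempt that; I would state the minimization inequality as the rigorous content and leave the decay to the empirical results.
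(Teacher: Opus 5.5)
Your proof is correct and is essentially the paper's. Your $\hat d^{\pm}_m = \max(0, \pm(\hat A_m - T_m))$ is exactly the paper's $\max(0, \pm(d^{+*}_m - d^{-*}_m + \delta_m))$, and the bound then follows from the same triangle inequality and weighted sum; the underdetermination remark is handled only qualitatively through the rank/null-space observation in both.

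The one difference is that you invoke complementarity of $d^{\pm *}$ to obtain $d^{+*}_m + d^{-*}_m = |A^*_m - T_m|$. That step is valid since $w_m > 0$, but it is unnecessary, because the paper only uses $|d^{+*}_m - d^{-*}_m| \le d^{+*}_m + d^{-*}_m$. There is also a harmless sign slip: the goal constraint gives $A^*_m - T_m = d^{+*}_m - d^{-*}_m$, not $T_m - A^*_m$, but only absolute values enter the argument.
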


\begin{proof}
Given the goal constraint $\sum_i c_{i,m} \hat{x}_i + \hat{d}_m^- -
\hat{d}_m^+ = T_m$ and the LP constraint $\sum_i c_{i,m} x^*_i +
d^{-*}_m - d^{+*}_m = T_m$, subtracting yields $\hat{d}_m^- -
\hat{d}_m^+ = d^{-*}_m - d^{+*}_m - \delta_m$. Setting
$\hat{d}_m^+ = \max(0, d^{+*}_m - d^{-*}_m + \delta_m)$ and
$\hat{d}_m^- = \max(0, d^{-*}_m - d^{+*}_m - \delta_m)$ gives a
feasible assignment. By the triangle inequality,
$\hat{d}_m^+ + \hat{d}_m^- \leq (d^{+*}_m + d^{-*}_m) + |\delta_m|$.
Weighting and summing over $m$ yields the bound. The underdetermined
claim follows from $n > |\mathcal{M}|$: the nutrient matrix has rank
at most $|\mathcal{M}|$, so its null space is non-trivial, giving the
solver integer directions along which $\delta$ can be reduced.
\end{proof}

\paragraph{Contrast with Hard-Constraint IP.}
In hard-constraint IP, there are no deviation variables. Any rounding
perturbation that exceeds the tolerance band makes the point infeasible.
MIGP avoids this cliff effect: the deviation variables absorb rounding
at a finite, bounded cost instead of declaring infeasibility.
\label{rem:hard_ip_contrast}

When the food pool is large enough, the continuous relaxation already
produces near-integer solutions. The deviation variables absorb the
small remaining rounding costs, making the integer solution incur negligible additional cost.

\begin{remark}[Connection to Shapley--Folkman]
\label{rem:shapley_folkman}
The Shapley--Folkman theorem~\cite{starr1969quasi} implies that in a
problem with $|\mathcal{M}|=4$ goal constraints and $n$ integer
variables, at most $|\mathcal{M}|=4$ variables need non-trivial
rounding---the remaining $n - 4$ can take their continuous optima
rounded to the nearest feasible integer with negligible cost.
In the MIGP context, this means the solver only needs to ``search''
over a small subset of foods; the deviation variables absorb the
residual cost of those few rounding decisions. This explains the
empirical threshold observed in Section~\ref{sec:gap_char}: once
$n \geq 15 \gg 4$, the surplus degrees of freedom make the rounding
cost effectively zero.
\end{remark}

This property is not specific to diet optimization. Any goal
programming formulation with integer variables will exhibit deviation
absorption, making the integrality gap structurally smaller than in
hard-constraint MIP. The Shapley--Folkman connection
(Remark~\ref{rem:shapley_folkman}) further explains why: the number of
``difficult'' rounding decisions is bounded by the number of constraints,
not the number of variables.

\subsection{MIGP vs.\ Continuous GP with Rounding}
\label{sec:vs_rounding}

The standard practice in diet optimization is to solve a continuous LP
and round afterward~\cite{gazan2018mathematical}. We compare this
(GP+rounding: solve LP, round to nearest integer, clamp to bounds)
against MIGP, which enforces integrality during optimization.

Figure~\ref{fig:migp_rounding} compares objective values across all
270 instances. Points below the diagonal mean MIGP found a better
solution.

\begin{figure}[H]
\centering
\includegraphics[width=0.8\textwidth]{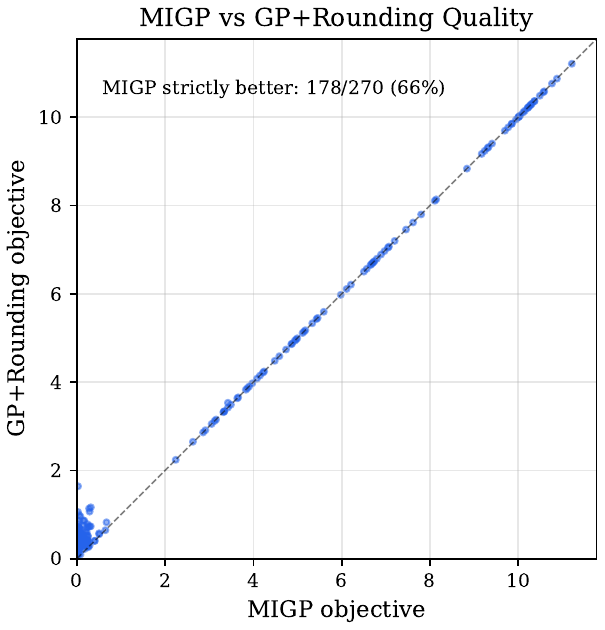}
\caption{MIGP objective vs.\ GP+rounding objective for all 270 instances.
Points below the diagonal (dashed) indicate MIGP superiority. MIGP finds
a strictly better solution in 66\% of instances and is never worse. The
cluster near the diagonal at high objective values corresponds to ambitious
configurations where both methods struggle equally.}
\label{fig:migp_rounding}
\end{figure}

The results separate cleanly by configuration type:

\paragraph{Non-ambitious (loose and tight bounds).}
MIGP is strictly better in \textbf{98\%} of instances (176/180) and
never worse. The improvement grows with size: median 61\% for 8~foods,
93\% for 15~foods, 98\% for 25~foods. With more foods, GP+rounding
increasingly rounds in conflicting directions (two foods up when only
one should be), while MIGP considers all rounding decisions jointly.

\paragraph{Ambitious (forced minimums, narrow range).}
Both methods produce essentially identical solutions in 98\% of
instances, with MIGP better in only 2\% and never worse. The targets
are inherently difficult here (often unachievable even continuously),
and the tight bounds leave little room for optimization. The LP already
produces near-integer solutions, so rounding introduces negligible cost.

\paragraph{Why rounding fails.}
Post-hoc rounding treats each food independently: round to the nearest
integer without considering the other macros. MIGP solves the rounding
problem globally. If the continuous optimum prescribes 2.3 and
1.7~servings, rounding gives (2,~2). But MIGP may find (3,~1)
produces lower total deviation because rounding one up compensates for
rounding the other down across multiple macros simultaneously.

\subsection{Practical Implications}
\label{sec:gap_practical}

For typical meal planning scenarios (5--20 foods, user-selected from a
database):

\paragraph{Practitioner guidelines.}
\begin{itemize}[leftmargin=*, itemsep=4pt]
\item \textbf{10+ foods: integrality is free.} The gap is zero in all
medium and large instances. A diverse food selection pays no penalty
for requiring integer servings.

\item \textbf{Under 10 foods: MIGP still wins.} Even when positive
gaps occur, MIGP beats GP+rounding in 98\% of non-ambitious instances
and is never worse. The gap measures distance from the LP lower bound,
not the quality relative to alternatives.

\item \textbf{Ambitious targets are the real challenge.} When targets
are inherently unachievable (e.g., very high protein with limited
protein sources), the issue is feasibility, not integrality. Absolute
deviation (Section~\ref{sec:experiments}) is a better metric here.

\item \textbf{Use continuous GP as a diagnostic.} Zero LP deviation
means the targets are achievable with fractions, so expect low integer
deviations. Positive LP deviation means no method can hit the targets
exactly; the user should adjust foods or targets.
\end{itemize}

%% file: sections/implementation.tex
\section{Implementation}
\label{sec:implementation}

The MIGP formulation is implemented as an open-source Python module for integration
into interactive meal planning applications. The code maps directly from the
mathematical formulation (Section~\ref{sec:formulation}); the integrality analysis
(Section~\ref{sec:integrality}) runs through the same solver.

\subsection{Solver Pipeline}

Algorithm~\ref{alg:pipeline} shows the end-to-end pipeline. Four steps: derive
targets, pre-check feasibility, build and solve the MILP, extract results.

\begin{algorithm}[H]
\DontPrintSemicolon
\KwIn{Foods $\mathcal{F}$ with nutrition $a_{i,m}$, serving sizes $s_i$, bounds $[\ell_i, u_i]$; target $T_{\text{cal}}, p_{\text{prot}}, p_{\text{carbs}}, p_{\text{fat}}$}
\KwOut{SolverResult with allocations, deviations, objective value}
\BlankLine
\tcp{Step 1: Derive gram targets}
$T_m \leftarrow$ derive\_targets($T_{\text{cal}}, p_{\text{prot}}, p_{\text{carbs}}, p_{\text{fat}}$)\;
\BlankLine
\tcp{Step 2: Feasibility pre-check}
\ForEach{$m \in \mathcal{M}$}{
  $\text{max}_m \leftarrow \sum_i c_{i,m} \cdot u_i$\;
  \If{$\text{max}_m < T_m$}{
    \Return \textsc{Infeasible}($m$, $T_m$, $\text{max}_m$)\;
  }
}
\BlankLine
\tcp{Step 3: Build and solve MILP}
$c \leftarrow [0^n \mid w_{\text{cal}}, \ldots, w_{\text{fat}} \mid w_{\text{cal}}, \ldots, w_{\text{fat}}]$\;
$A_{\text{eq}} \leftarrow$ goal constraint matrix \tcp*{$|\mathcal{M}| \times (n + 2|\mathcal{M}|)$}
$x^* \leftarrow \texttt{milp}(c, A_{\text{eq}}, T, \text{bounds}, \text{integrality})$\;
\BlankLine
\tcp{Step 4: Extract result}
$\text{allocations} \leftarrow \{(i, \lceil x^*_i \rfloor) : x^*_i > 0\}$\;
$\text{deviations} \leftarrow$ compute per-macro deviations from allocations\;
\Return SolverResult(allocations, deviations, $f(x^*)$)\;
\caption{MIGP Solver Pipeline}
\label{alg:pipeline}
\end{algorithm}

\subsection{Solver Backend}

The model solves via \texttt{scipy.optimize.milp}, which wraps the HiGHS
solver~\cite{huangfu2018highs} (open-source, high-performance, no commercial license
required).

The decision vector concatenates serving counts ($n$ integer variables), over-deviations
(4 continuous), and under-deviations (4 continuous): $n + 8$ variables total. An
integrality mask marks the first $n$ entries as integer and the remaining 8 as
continuous. The four goal constraints encode as a single equality
\texttt{LinearConstraint}; serving bounds go into a \texttt{Bounds} object.

The module also exposes a continuous relaxation (\texttt{optimize\_continuous}) that
solves the LP relaxation for integrality gap computation. Both methods share identical
problem construction; the only difference is the integrality mask.

\subsection{Computational Complexity}

With 4 macronutrients, the MILP has $n + 8$ variables ($n$ integer serving counts,
8 continuous deviations) and just 4 equality constraints. The continuous relaxation
is a small LP (polynomial time). The integer component is NP-hard in theory, but
meal planning instances are small enough that this never matters in practice.

Three structural properties make the branch-and-bound search efficient for MIGP.
First, only 4~equality constraints means the LP relaxation at each node is tiny
and solves in microseconds. Second, deviation absorption
(Proposition~\ref{prop:absorption}) produces tight LP bounds, enabling aggressive
pruning of suboptimal branches early in the search tree. Third, the
Shapley--Folkman argument (Remark~\ref{rem:shapley_folkman}) implies that at most
4~variables need non-trivial branching; the remaining variables can be fixed at
their LP-rounded values without loss of optimality.

Empirical scaling is polynomial in $n$ for our benchmark range: median solve times are 13\,ms
for 8 foods, 47\,ms for 15 foods, and 1.1\,s for 25 foods
(Section~\ref{sec:experiments}). A configurable time limit (default 30\,s) guards
against pathological instances; all benchmark instances solve well within it.

\subsection{Feasibility Pre-Check}

Before calling the solver, the pipeline computes the maximum achievable value of
each macro (all foods at maximum servings). If any macro target exceeds that maximum,
the instance is provably infeasible and the system returns an informative message
immediately: ``Cannot reach protein target of 80\,g. Maximum achievable: 62\,g.''
No need to wait for the solver to report the same thing.

\subsection{Data Source}

Nutritional data comes from FatSecret, a food database with per-100g macronutrient
values for over 500,000 foods. The application supports two search modes: local
database search over previously imported foods, and live web search that retrieves
and imports new foods on demand. All values are normalized to per-100g before
storage, so the linear scaling property $c_{i,m} = a_{i,m} \cdot s_i / 100$ holds
without additional conversion.

\subsection{User Interface}

An interactive Streamlit web application provides the user-facing interface. The
workflow follows natural meal planning order:
\begin{enumerate}
\item \textbf{Set targets}: calorie goal and macronutrient percentages via sliders
\item \textbf{Select foods}: search the database and add foods to the meal
\item \textbf{Configure servings}: adjust per-food serving size, minimum, and maximum bounds
\item \textbf{Optimize}: invoke the MIGP solver (sub-100\,ms for typical 8--15 food meals)
\item \textbf{Review results}: allocation table, macronutrient deviation report, and
  donut chart showing target vs.\ achieved composition
\end{enumerate}

The solver runs as a standalone module with no UI dependency. It accepts
Pydantic-validated \texttt{FoodInput} and \texttt{MealTarget} objects and returns a
structured \texttt{SolverResult}, making it trivial to wire into other interfaces
such as a Telegram bot or REST API.

\subsection{Reproducibility}

The complete implementation (solver, benchmark suite, food profiles, comparison
methods, and figure generation) is open-source Python. Running the benchmark script
reproduces all CSV data files and figures used in Section~\ref{sec:experiments}. The
solver has no external dependencies beyond SciPy and NumPy.

With the implementation established, we now evaluate the MIGP formulation empirically
against two alternative approaches across a benchmark of 810 instances.

%% file: sections/experiments.tex
\section{Computational Evaluation}
\label{sec:experiments}

\subsection{Benchmark Design}

We evaluate MIGP across a matrix of configurations designed to cover realistic meal planning scenarios.

\subsubsection{Food Bank}

All benchmark instances draw from a bank of 30 foods with per-100g nutritional values from the USDA FoodData Central database. Foods span four profiles: high-protein (chicken breast, salmon, eggs, tuna, tofu, beef, cottage cheese, Greek yogurt), high-carbohydrate (rice, oats, pasta, sweet potato, banana, bread, potato), high-fat (olive oil, almonds, peanut butter, avocado, cheese, walnuts), and balanced (lentils, chickpeas, milk, broccoli, quinoa, black beans, spinach, whey protein). Each food has a realistic default serving size (e.g., 50\,g per egg, 150\,g per rice portion, 15\,g per tablespoon of olive oil).

\subsubsection{Configurations}

Nine configurations combine three problem sizes with three difficulty levels (Table~\ref{tab:configs}). For each configuration, 30 instances are generated by sampling foods without replacement using distinct random seeds: 270 instances per method, 810 total benchmark runs.

\begin{table}[H]
\centering
\begin{tabular}{@{}l c c c c@{}}
\toprule
\textbf{Config} & \textbf{Foods} & \textbf{Serving range} & \textbf{Calories} & \textbf{Macro split} \\
\midrule
Small   & 8  & \multirow{3}{*}{Loose: $[0, 10]$}    & 600  & 30/45/25 \\
Medium  & 15 &                                        & 800  & 30/45/25 \\
Large   & 25 &                                        & 1000 & 30/45/25 \\
\midrule
Small   & 8  & \multirow{3}{*}{Tight: $[0, 4]$}      & 600  & 35/40/25 \\
Medium  & 15 &                                        & 800  & 35/40/25 \\
Large   & 25 &                                        & 1000 & 35/40/25 \\
\midrule
Small   & 8  & \multirow{3}{*}{Ambitious: $[1, 3]$}   & 600  & 40/35/25 \\
Medium  & 15 &                                        & 800  & 40/35/25 \\
Large   & 25 &                                        & 1000 & 40/35/25 \\
\bottomrule
\end{tabular}
\caption{Benchmark configurations. Macro split is protein/carbs/fat
percentage. Ambitious configs force a minimum of 1~serving per food,
creating challenging targets.}
\label{tab:configs}
\end{table}

The \emph{ambitious} configuration is worth noting: by requiring every food to contribute at least 1~serving, the model must absorb nutrients from all selected foods, often making the original macro targets unachievable. This simulates scenarios where dietary variety is imposed as a hard constraint.

\subsection{Comparison Methods}

Three methods are compared, all using the same food inputs and targets:

\paragraph{MIGP (ours).} The full formulation from Section~\ref{sec:formulation},
solved with \texttt{scipy.optimize.milp} (HiGHS backend). Integer serving
variables with goal programming deviation minimization.

\paragraph{GP+Rounding.} The continuous relaxation of MIGP is solved
(standard LP), then each fractional serving is rounded to the nearest
integer and clamped to the serving bounds. This represents the current
practice documented by Gazan et al.~\cite{gazan2018mathematical}, where
continuous diet optimization is followed by post-hoc discretization.

\paragraph{Hard-Constraint IP.} A standard integer program that minimizes
total servings subject to hard $\pm$5\% tolerance bands on each macronutrient.
This represents the integer programming approach to diet
optimization~\cite{benvenuti2024designing}, where nutrient requirements
are enforced as inequality constraints rather than soft goals.

\subsection{Metrics}

For each instance, we record:
\begin{itemize}
\item \textbf{Feasibility}: whether the method returns a valid solution
\item \textbf{Objective value}: weighted sum of absolute deviations
  (Equation~\ref{eq:objective}), computed identically for all methods to
  enable fair comparison
\item \textbf{Max deviation \%}: largest single-macro deviation as a
  percentage of its target
\item \textbf{Macros $\leq$5\%}: percentage of macronutrients within 5\%
  of their target (practical quality threshold)
\item \textbf{Solve time}: wall-clock milliseconds including solver
  overhead
\end{itemize}

\subsection{Results}

\subsubsection{Method Comparison}

Table~\ref{tab:comparison} summarizes performance across all 270 instances
per method.

\input{figures/comparison_table}

\paragraph{Feasibility.}
MIGP achieves 100\% feasibility across all configurations, a direct
consequence of Proposition~\ref{prop:feasibility}. GP+Rounding also
achieves 100\% feasibility, since rounding a feasible LP solution always
produces an integer point within the serving bounds. Hard-Constraint IP
fails on 140 of 270 instances (51.9\%), including all 90 ambitious instances
and the majority of small instances.

Figure~\ref{fig:feasibility} shows feasibility rates by constraint
tightness. The gap between MIGP and Hard-IP widens as constraints tighten:
Hard-IP achieves 74\% feasibility with loose bounds but 0\% with ambitious bounds.

\begin{figure}[H]
\centering
\includegraphics[width=0.8\textwidth]{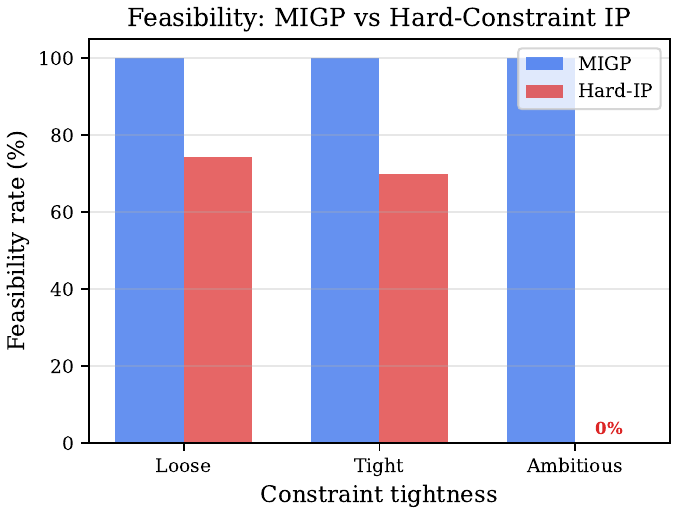}
\caption{Feasibility rates by constraint tightness (Y-axis capped at
105\% for readability). MIGP maintains 100\% feasibility regardless of
configuration. Hard-Constraint IP degrades as targets become more
restrictive, failing entirely on ambitious instances where every food
must contribute at least 1~serving. GP+Rounding (not shown) also achieves
100\% feasibility since rounding preserves serving bounds.}
\label{fig:feasibility}
\end{figure}

\paragraph{Solution quality.}
Among feasible solutions, MIGP achieves a median objective of 0.141 with
a median maximum deviation of 6.3\%. For non-ambitious configurations, the
results are substantially better: median maximum deviation is 1.4\% for
medium and 0.3\% for large instances, with 100\% of macros within the 5\%
practical threshold. GP+Rounding produces a median objective 3.8$\times$
higher than MIGP (0.529 vs.\ 0.141), with median maximum deviation of
21.6\%.

Hard-Constraint IP, when feasible, produces low-deviation solutions by
construction (the $\pm$5\% tolerance band ensures tight adherence). But it
provides \emph{no solution at all} for over half of instances, making
it unreliable for interactive use.

Across 810 benchmark instances, MIGP achieves 100\% feasibility
while finding strictly better solutions than GP+Rounding in 66\% of
instances (never worse). Hard-Constraint IP fails on 51.9\% of instances
and returns no solution at all for any ambitious configuration.

\subsubsection{Solve Time}

Figure~\ref{fig:solve_time} shows solve time scaling with problem size
(log scale). MIGP is the slowest method due to the integer programming
component, with median times of 13\,ms (8~foods), 47\,ms (15~foods),
and 1.1\,s (25~foods). GP+Rounding is consistently fast at $\sim$1\,ms
regardless of size, since it solves only an LP. Hard-Constraint IP
ranges from 1--5\,ms, benefiting from a simpler objective (no deviation
variables).

\begin{figure}[H]
\centering
\includegraphics[width=0.8\textwidth]{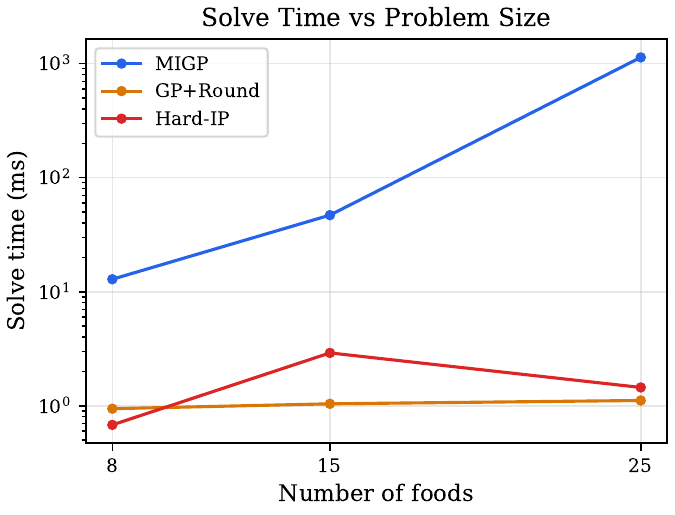}
\caption{Median solve time vs.\ number of foods (log scale). MIGP scales
from 13\,ms to 1.1\,s as problem size increases. GP+Rounding and Hard-IP
remain under 5\,ms. All methods are fast enough for interactive use at
typical meal sizes (8--15 foods).}
\label{fig:solve_time}
\end{figure}

For typical interactive meal planning (8--15 foods), MIGP solves in under
100\,ms, imperceptible to the user. The 25-food case (1.1\,s median)
is an upper bound that rarely occurs in practice; users typically select
5--15 foods for a single meal.

\subsubsection{Deviation Distribution}

Figure~\ref{fig:deviation_boxplots} shows the distribution of maximum
macro deviation by problem size across all three methods. The box plots
include all configurations (loose, tight, and ambitious), so the wide
ranges reflect ambitious instances with inherently unachievable targets.

\begin{figure}[H]
\centering
\includegraphics[width=0.85\textwidth]{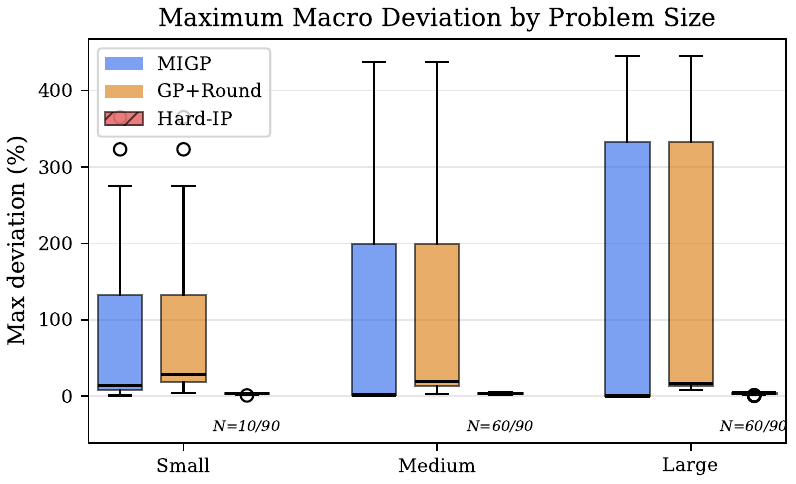}
\caption{Maximum macro deviation by problem size and method. Hard-IP
results shown only for feasible instances ($N$ shown below each box);
the 140 infeasible instances (51.9\%) are excluded, creating a strong
selection bias toward easy cases. Hard-IP's apparent low deviations
reflect its $\pm$5\% tolerance band, not superior optimization. Wide
ranges for MIGP and GP+Rounding are driven by ambitious configurations
where targets are inherently unachievable.}
\label{fig:deviation_boxplots}
\end{figure}

Hard-IP's apparent superiority in deviation is misleading: it achieves
low deviations precisely because it rejects instances where targets cannot
be met within 5\%. The 140 instances where Hard-IP fails are excluded from
the box plot. MIGP provides a solution for every instance, accepting
higher deviations on difficult problems rather than returning nothing.

\subsubsection{Hard-IP Failure Analysis}

The 51.9\% infeasibility rate of Hard-Constraint IP is not uniformly
distributed. All 90 ambitious instances fail (100\%), since the forced
minimum of 1 serving per food makes the $\pm$5\% tolerance bands impossible
to satisfy simultaneously across all four macros. Among non-ambitious
configurations, small instances (8~foods) suffer the most: fewer decision
variables means the solver has insufficient degrees of freedom to satisfy
all hard constraints. Medium and large loose instances achieve near-perfect
feasibility because more foods provide more combinations that fall within the
tolerance bands.

This pattern reveals the fragility of hard-constraint formulations: feasibility
depends not only on the target but on the combinatorial structure of the
available foods, something users cannot control or predict.

\subsection{Worked Examples}
\label{sec:examples}

The benchmark results aggregate over hundreds of random instances. To build
intuition for how the three methods behave on concrete meal planning scenarios,
we present five representative examples spanning single meals, batch recipes, and
high-protein recovery scenarios. All examples use foods from the USDA-sourced
food bank. Bulk whole foods (chicken, rice, broccoli) use 50\,g serving sizes
to provide fine-grained optimization, while packaged or naturally portioned
items (eggs, olive oil, protein scoops, canned tuna) retain their natural
serving units. This dual approach (small servings for measurable bulk foods,
natural units for discrete items) gives the optimizer more granularity where it
matters most, consistent with the granularity analysis in
Section~\ref{sec:sensitivity}.

\subsubsection{Example A: Post-Workout Recovery Meal}

A simple scenario with 5~foods targeting 600\,kcal at a 30/45/25
protein/carbs/fat split. All whole foods use 50\,g serving sizes
for fine-grained optimization; avocado uses 30\,g and olive oil
retains its natural 15\,g tablespoon. We begin by deriving the gram
targets from the percentage-based specification
(Table~\ref{tab:example_a_targets}), then examine the food pool
(Table~\ref{tab:example_a_input}).

\input{figures/example_a_targets}
\input{figures/example_a_input}

Table~\ref{tab:example_a_breakdown} shows the MIGP solution with a
per-food nutrient breakdown and running totals. MIGP selects four of the
five foods: 2~servings of chicken breast (100\,g), 3~servings of rice
(150\,g), 5~servings of broccoli (250\,g), and 3~servings of avocado
(90\,g). The maximum deviation is 9.7\% on fat, reflecting that even
with fine granularity, the fat sources available (avocado at 4.4\,g per
serving, olive oil at 15\,g) do not divide evenly into the 16.7\,g target.

\input{figures/example_a_breakdown}

To understand why MIGP outperforms GP+Rounding, Table~\ref{tab:example_a_sidebyside}
compares the continuous LP solution, the rounded allocation, and the MIGP
integer allocation side by side. The LP relaxation finds a fractional
optimum (e.g., 2.33~servings of chicken, 2.62~servings of avocado)
that rounds to a different integer solution than MIGP's joint optimization.

\input{figures/example_a_sidebyside}
\input{figures/example_a_results}

\input{figures/example_a_commentary}

Both MIGP and GP+Rounding achieve similar maximum single-macro deviations
(9.7\% vs.\ 9.8\%), but MIGP produces a better overall objective
(0.165 vs.\ 0.230) because it distributes deviations more efficiently
across macros. Hard-IP fails entirely: with only 5~foods, the $\pm$5\%
tolerance bands on all four macros simultaneously are too restrictive.

\subsubsection{Example B: Balanced Lunch}

A medium scenario with 8~foods targeting 800\,kcal at 35/40/25.
All bulk foods use 50\,g serving sizes; olive oil (15\,g), almonds
(30\,g), and eggs (50\,g) retain their natural portion units.
Table~\ref{tab:example_b_targets} derives the gram targets, and
Table~\ref{tab:example_b_input} lists the food pool.

\input{figures/example_b_targets}
\input{figures/example_b_input}

This example shows \emph{deviation absorption} in action.
Table~\ref{tab:example_b_breakdown} shows that MIGP achieves an
objective of 0.051 with a maximum deviation of only 2.4\%, selecting
4 of the 8 available foods. The solver concentrates servings on
chicken breast (200\,g), rice (100\,g), sweet potato (250\,g),
and olive oil (15\,g), the combination that best matches the macro
profile, rather than spreading thinly across all options.

\input{figures/example_b_breakdown}

Table~\ref{tab:example_b_sidebyside} shows why: the LP relaxation
distributes servings across 4~foods (including fractional amounts of
broccoli and eggs), but rounding each independently accumulates errors
that cannot be redistributed.

\input{figures/example_b_sidebyside}
\input{figures/example_b_results}

GP+Rounding produces a 2.8$\times$ worse objective (0.145) from this
accumulated rounding error. Hard-IP succeeds here (8~foods with
50\,g granularity provide enough degrees of freedom), producing a
reasonable solution with 4.1\% maximum deviation---within its tolerance
band but still 1.8$\times$ the MIGP objective.

\subsubsection{Example C: Variety Dinner (Ambitious Bounds)}

An ambitious scenario forcing all 8~foods to contribute at least 1~serving,
targeting 600\,kcal at 40/35/25. Serving sizes remain small (50\,g for
bulk foods, 30\,g for avocado, 15\,g for olive oil), but the minimum of
1~serving per food forces the model to include every item.
Table~\ref{tab:example_c_targets} derives the gram targets, and
Table~\ref{tab:example_c_input} lists the food pool with its constrained
serving ranges.

\input{figures/example_c_targets}
\input{figures/example_c_input}

This example shows MIGP's graceful degradation.
Table~\ref{tab:example_c_breakdown} shows the solution: even with small
serving sizes, forcing all 8~foods to contribute produces a meal
exceeding the 600\,kcal target by 28\%. Fat is the most affected
macro (121\% over target), driven by the mandatory olive oil, avocado,
and egg servings.

\input{figures/example_c_breakdown}

Table~\ref{tab:example_c_sidebyside} compares the three allocation
strategies. The LP relaxation is only slightly fractional (chicken at
2.06~servings, rice at 1.22), so rounding produces a nearly identical
allocation. MIGP adjusts rice from 1 to 2~servings to reduce broccoli
from 6 to 3~servings, trading carb accuracy for a lower overall objective.

\input{figures/example_c_sidebyside}
\input{figures/example_c_results}

MIGP produces the best-effort solution with 28\% calorie deviation.
Hard-IP fails because no integer assignment can satisfy the $\pm$5\%
tolerance with mandatory minimums. The key distinction is behavioral:
MIGP and GP+Rounding \emph{return a meal} that a user can adjust, while
Hard-IP returns nothing. With small serving sizes, MIGP achieves a tighter
solution (obj 1.556) than GP+Rounding (1.578)---joint integer optimization
still helps even in over-constrained regimes.

\subsubsection{Example D: Cyclist Energy Snack Batch}

A batch recipe scenario: 1000\,kcal total (to be divided into 10
$\times$ 100\,kcal portions), with an extreme 10/75/15 (P/C/F) split
typical of endurance cycling nutrition. Six foods, including three
carb-dense ingredients (honey at 15\,g, dates at 30\,g, oats at
40\,g) alongside banana, peanut butter, and dark chocolate chips.
All serving bounds are [0, $n$] with no forced minimums.

This example demonstrates MIGP as a \emph{recipe optimization tool}:
the solver finds total ingredient quantities for a 1000\,kcal batch,
and the user divides into portions. The extreme macro split (75\%
carbohydrates, only 10\% protein) tests the model with a heavily skewed
target. With 6 carb-rich foods available, deviation absorption should
handle the extreme split effectively.

\input{figures/example_d_targets}
\input{figures/example_d_input}
\input{figures/example_d_breakdown}
\input{figures/example_d_sidebyside}
\input{figures/example_d_results}

\input{figures/example_d_commentary}

\subsubsection{Example E: Post-Gym Protein Recovery}

A post-gym recovery scenario: 600\,kcal with a high-protein 45/30/25
(P/C/F) split. Six foods including two ``packaged'' items: whey protein
powder (30\,g scoop, minimum 1~scoop forced) and canned tuna (80\,g
can). Olive oil uses a 5\,g teaspoon serving for fine-grained fat
adjustment.

This example demonstrates MIGP with foods whose serving sizes are
dictated by their packaging or physical form. The protein scoop
and tuna can are natural discrete units that continuous optimization
would split into impractical fractions (e.g., 1.4~scoops of protein
powder). The minimum 1-scoop constraint shows forced inclusion, and
the high protein target (45\%) tests the model's ability to concentrate
on protein-dense foods while balancing the remaining macros.

\input{figures/example_e_targets}
\input{figures/example_e_input}
\input{figures/example_e_breakdown}
\input{figures/example_e_sidebyside}
\input{figures/example_e_results}

\input{figures/example_e_commentary}

\subsection{Sensitivity Analysis}
\label{sec:sensitivity}

We evaluate sensitivity to two design choices using the medium-loose
configuration (15~foods, $[0, 10]$ serving range, 800\,kcal) as a
representative scenario.

\subsubsection{Penalty Weight Schemes}

Three weighting strategies are compared (Figure~\ref{fig:sensitivity}):
\begin{itemize}
\item \textbf{Inverse-target} (default): $w_m = 1/\max(T_m, 1)$
\item \textbf{Equal weights}: $w_m = 1$ for all macros
\item \textbf{Double protein}: inverse-target with $2\times$ protein weight
\end{itemize}

\begin{figure}[H]
\centering
\includegraphics[width=0.8\textwidth]{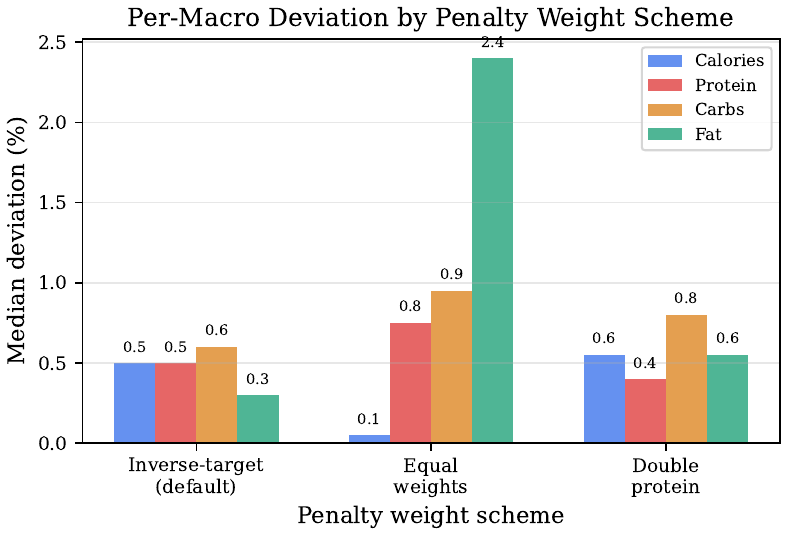}
\caption{Sensitivity to penalty weight scheme (medium-loose config, 30
instances). Objective values across schemes are \emph{not directly
comparable}: each scheme uses different penalty magnitudes $w_m$, so a
lower objective under equal weights does not imply a better solution than
under inverse-target weights. The meaningful comparison is in per-macro
deviation percentages: inverse-target and double-protein achieve balanced
deviations (all median per-macro deviations under 1\%); equal weights
allow fat to deviate up to 2.4\% (median).}
\label{fig:sensitivity}
\end{figure}

Objective values across schemes are not directly comparable, since different
weight magnitudes yield different scales. What matters is macro-level
deviation: inverse-target achieves balanced deviations across all macros
(median per-macro deviations under 1\%), while equal weights allow fat to
deviate up to 2.4\% (median) because calorie deviations, with larger
absolute magnitude, dominate the objective. The double-protein scheme
successfully reduces protein deviation to 0.4\% (vs.\ 0.5\% for
inverse-target) at negligible cost to other macros, demonstrating that
the penalty weight mechanism provides effective user control over macro
prioritization.

\subsubsection{Serving Granularity}

Serving size determines the discretization granularity of the solution
space. We test four uniform serving sizes (25\,g, 50\,g, 100\,g, 200\,g)
on the same food instances (Table~\ref{tab:granularity}).

\begin{table}[H]
\centering
\begin{tabular}{@{}l c c c c@{}}
\toprule
\textbf{Serving size} & \textbf{Median obj.} & \textbf{Max dev.\ \%} & \textbf{Within 5\%} & \textbf{Median time} \\
\midrule
25\,g  & 0.001 & 0.1\% & 100\% & 2651\,ms \\
50\,g  & 0.008 & 0.3\% & 100\% & 131\,ms \\
100\,g & 0.039 & 1.9\% & 100\% & 50\,ms \\
200\,g & 0.108 & 6.2\% & 75\%  & 30\,ms \\
\bottomrule
\end{tabular}
\caption{Effect of serving granularity on solution quality and solve
time. Finer granularity improves accuracy but increases computational
cost. The 100\,g default offers a practical tradeoff.}
\label{tab:granularity}
\end{table}

Finer granularity yields better solutions at higher cost. At 25\,g servings,
deviations are negligible (max 0.1\%) but solve time reaches 2.7\,s, nearly
two orders of magnitude slower than 200\,g servings. This tradeoff is not
just computational: 25\,g servings are impractical for real meal preparation,
since measuring 25\,g of chicken breast or 25\,g of rice requires a precision
scale and produces portions that do not correspond to natural serving units. The
default per-food serving sizes used in our implementation (ranging from 15\,g
for olive oil to 250\,g for milk) balance accuracy and efficiency by matching
serving granularity to the food's natural portion size.

As serving size decreases, the LP relaxation and integer solution converge
because finer granularity reduces the rounding gap. At 25\,g, the
integrality gap is effectively zero (median objective 0.001). At 200\,g,
the gap is substantial (median 0.108), and deviation absorption becomes
critical for producing usable solutions. The MIGP formulation's value is
greatest precisely in the practical regime where servings are
coarse-grained.

%% file: figures/comparison_table.tex
\begin{table}[t]
    \centering
    \caption{Comparison of optimization methods across all benchmark configurations.}
    \label{tab:comparison}
    \begin{tabular}{lccccc}
        \toprule
        Method & Feasibility & Median Obj. & Median Max Dev. & Macros $\leq$5\% & Median Time \\
        \midrule
        MIGP & 100\% & 0.1410 & 6.3\% & 75\% & 19.9 \\
        GP+Round & 100\% & 0.5286 & 21.6\% & 0\% & 1.0 \\
        Hard-IP & 48\% & 0.0911 & 4.0\% & 100\% & 3.5 \\
        \bottomrule
    \end{tabular}
\end{table}

%% file: figures/example_a_targets.tex
\begin{table}[H]
\centering
\caption{Example A: target derivation from 600\,kcal with 30/45/25 (P/C/F\%).}
\label{tab:example_a_targets}
\begin{tabular}{@{}l c c l r@{}}
    \toprule
    Macro & \% split & kcal/g & Calculation & Target \\
    \midrule
        Calories & --- & --- & $600$ & 600.0\,kcal \\
        Protein & 30\% & 4 & $600 \times 0.30 \;/\; 4 = 45.0$ & 45.0\,g \\
        Carbs & 45\% & 4 & $600 \times 0.45 \;/\; 4 = 67.5$ & 67.5\,g \\
        Fat & 25\% & 9 & $600 \times 0.25 \;/\; 9 = 16.7$ & 16.7\,g \\
    \bottomrule
\end{tabular}
\end{table}

%% file: figures/example_a_input.tex
\begin{table}[H]
\centering
\small
\caption{Example A: food inputs. Target: 600\,kcal, 30/45/25 (P/C/F\%). Derived gram targets: 45.0g protein, 67.5g carbs, 16.7g fat.}
\label{tab:example_a_input}
\begin{tabular}{@{}l c c c c c c c c c c@{}}
    \toprule
    & & & \multicolumn{4}{c}{Per 100\,g} & \multicolumn{4}{c}{Per serving} \\
    \cmidrule(lr){4-7} \cmidrule(lr){8-11}
    Food & Srv (g) & Range & kcal & P & C & F & kcal & P & C & F \\
    \midrule
        Chicken breast & 50 & [0, 10] & 165 & 31.0 & 0.0 & 3.6 & 82.5 & 15.5 & 0.0 & 1.8 \\
        White rice & 50 & [0, 10] & 130 & 2.7 & 28.2 & 0.3 & 65.0 & 1.4 & 14.1 & 0.1 \\
        Broccoli & 50 & [0, 8] & 35 & 2.4 & 7.2 & 0.4 & 17.5 & 1.2 & 3.6 & 0.2 \\
        Avocado & 30 & [0, 6] & 160 & 2.0 & 8.5 & 14.7 & 48.0 & 0.6 & 2.5 & 4.4 \\
        Olive oil & 15 & [0, 4] & 884 & 0.0 & 0.0 & 100.0 & 132.6 & 0.0 & 0.0 & 15.0 \\
    \bottomrule
\end{tabular}
\end{table}

%% file: figures/example_a_breakdown.tex
\begin{table}[H]
\centering
\caption{Example A: MIGP solution breakdown. Each row shows one food's contribution; the deviation row shows (actual $-$ target) / target as a percentage.}
\label{tab:example_a_breakdown}
\begin{tabular}{@{}l c r r r r@{}}
    \toprule
    Food (serving) & Qty & kcal & P (g) & C (g) & F (g) \\
    \midrule
        Chicken breast (50\,g) & 2 & 165.0 & 31.0 & 0.0 & 3.6 \\
        White rice (50\,g) & 3 & 195.0 & 4.0 & 42.3 & 0.5 \\
        Broccoli (50\,g) & 5 & 87.5 & 6.0 & 18.0 & 1.0 \\
        Avocado (30\,g) & 3 & 144.0 & 1.8 & 7.7 & 13.2 \\
        \midrule
        \textbf{Total} & & 591.5 & 42.9 & 68.0 & 18.3 \\
        \textbf{Target} & & 600.0 & 45.0 & 67.5 & 16.7 \\
        \textbf{Deviation} & & -1.4\% & -4.8\% & +0.7\% & +9.7\% \\
    \bottomrule
\end{tabular}
\end{table}

%% file: figures/example_a_sidebyside.tex
\begin{table}[H]
\centering
\caption{Example A: side-by-side method allocations. LP~Continuous shows fractional servings (lower bound $z_{LP}$); GP+Round rounds each food to the nearest integer; MIGP finds the globally optimal integer solution.}
\label{tab:example_a_sidebyside}
\begin{tabular}{@{}l r r r@{}}
    \toprule
    & LP Continuous & GP+Round & MIGP \\
    \midrule
        Chicken breast & 2.33 & 2 & 2 \\
        White rice & 3.88 & 4 & 3 \\
        Broccoli & 1.71 & 2 & 5 \\
        Avocado & 2.62 & 3 & 3 \\
        \midrule
        kcal & 600.0 (+0.0\%) & 604.0 (+0.7\%) & 591.5 (-1.4\%) \\
        P (g) & 45.0 (+0.0\%) & 40.6 (-9.8\%) & 42.9 (-4.8\%) \\
        C (g) & 67.5 (+0.0\%) & 71.2 (+5.6\%) & 68.0 (+0.7\%) \\
        F (g) & 16.7 (+0.0\%) & 17.8 (+7.0\%) & 18.3 (+9.7\%) \\
        \midrule
        Objective ($z$) & 0.0000 & 0.2298 & 0.1654 \\
    \bottomrule
\end{tabular}
\end{table}

%% file: figures/example_a_results.tex
\begin{table}[H]
\centering
\caption{Example A: method comparison with per-macro deviations (\%).}
\label{tab:example_a_compare}
\begin{tabular}{@{}l c c c c c c c@{}}
    \toprule
    Method & Feasible & Obj. & Max dev. & Cal & Prot & Carbs & Fat \\
    \midrule
        MIGP & Yes & 0.1654 & +9.7\% & -1.4 & -4.8 & +0.7 & +9.7 \\
        GP+Round & Yes & 0.2298 & 9.8\% & 0.7 & 9.8 & 5.6 & 7.0 \\
        Hard-IP & No & --- & --- & --- & --- & --- & --- \\
    \bottomrule
\end{tabular}
\end{table}

%% file: figures/example_a_commentary.tex

%% file: figures/example_b_targets.tex
\begin{table}[H]
\centering
\caption{Example B: target derivation from 800\,kcal with 35/40/25 (P/C/F\%).}
\label{tab:example_b_targets}
\begin{tabular}{@{}l c c l r@{}}
    \toprule
    Macro & \% split & kcal/g & Calculation & Target \\
    \midrule
        Calories & --- & --- & $800$ & 800.0\,kcal \\
        Protein & 35\% & 4 & $800 \times 0.35 \;/\; 4 = 70.0$ & 70.0\,g \\
        Carbs & 40\% & 4 & $800 \times 0.40 \;/\; 4 = 80.0$ & 80.0\,g \\
        Fat & 25\% & 9 & $800 \times 0.25 \;/\; 9 = 22.2$ & 22.2\,g \\
    \bottomrule
\end{tabular}
\end{table}

%% file: figures/example_b_input.tex
\begin{table}[H]
\centering
\small
\caption{Example B: food inputs. Target: 800\,kcal, 35/40/25 (P/C/F\%). Derived gram targets: 70.0g protein, 80.0g carbs, 22.2g fat.}
\label{tab:example_b_input}
\begin{tabular}{@{}l c c c c c c c c c c@{}}
    \toprule
    & & & \multicolumn{4}{c}{Per 100\,g} & \multicolumn{4}{c}{Per serving} \\
    \cmidrule(lr){4-7} \cmidrule(lr){8-11}
    Food & Srv (g) & Range & kcal & P & C & F & kcal & P & C & F \\
    \midrule
        Chicken breast & 50 & [0, 8] & 165 & 31.0 & 0.0 & 3.6 & 82.5 & 15.5 & 0.0 & 1.8 \\
        Salmon fillet & 50 & [0, 8] & 208 & 20.4 & 0.0 & 13.4 & 104.0 & 10.2 & 0.0 & 6.7 \\
        White rice & 50 & [0, 8] & 130 & 2.7 & 28.2 & 0.3 & 65.0 & 1.4 & 14.1 & 0.1 \\
        Sweet potato & 50 & [0, 8] & 90 & 2.0 & 20.7 & 0.1 & 45.0 & 1.0 & 10.3 & 0.1 \\
        Broccoli & 50 & [0, 8] & 35 & 2.4 & 7.2 & 0.4 & 17.5 & 1.2 & 3.6 & 0.2 \\
        Olive oil & 15 & [0, 4] & 884 & 0.0 & 0.0 & 100.0 & 132.6 & 0.0 & 0.0 & 15.0 \\
        Almonds & 30 & [0, 4] & 579 & 21.2 & 21.6 & 49.9 & 173.7 & 6.4 & 6.5 & 15.0 \\
        Whole eggs & 50 & [0, 4] & 155 & 12.6 & 1.1 & 10.6 & 77.5 & 6.3 & 0.6 & 5.3 \\
    \bottomrule
\end{tabular}
\end{table}

%% file: figures/example_b_breakdown.tex
\begin{table}[H]
\centering
\caption{Example B: MIGP solution breakdown. Each row shows one food's contribution; the deviation row shows (actual $-$ target) / target as a percentage.}
\label{tab:example_b_breakdown}
\begin{tabular}{@{}l c r r r r@{}}
    \toprule
    Food (serving) & Qty & kcal & P (g) & C (g) & F (g) \\
    \midrule
        Chicken breast (50\,g) & 4 & 330.0 & 62.0 & 0.0 & 7.2 \\
        White rice (50\,g) & 2 & 130.0 & 2.7 & 28.2 & 0.3 \\
        Sweet potato (50\,g) & 5 & 225.0 & 5.0 & 51.8 & 0.2 \\
        Olive oil (15\,g) & 1 & 132.6 & 0.0 & 0.0 & 15.0 \\
        \midrule
        \textbf{Total} & & 817.6 & 69.7 & 80.0 & 22.8 \\
        \textbf{Target} & & 800.0 & 70.0 & 80.0 & 22.2 \\
        \textbf{Deviation} & & +2.2\% & -0.4\% & -0.1\% & +2.4\% \\
    \bottomrule
\end{tabular}
\end{table}

%% file: figures/example_b_sidebyside.tex
\begin{table}[H]
\centering
\caption{Example B: side-by-side method allocations. LP~Continuous shows fractional servings (lower bound $z_{LP}$); GP+Round rounds each food to the nearest integer; MIGP finds the globally optimal integer solution.}
\label{tab:example_b_sidebyside}
\begin{tabular}{@{}l r r r@{}}
    \toprule
    & LP Continuous & GP+Round & MIGP \\
    \midrule
        Chicken breast & 2.55 & 3 & 4 \\
        White rice & 0.00 & 0 & 2 \\
        Sweet potato & 6.26 & 6 & 5 \\
        Broccoli & 3.76 & 4 & 0 \\
        Olive oil & 0.00 & 0 & 1 \\
        Whole eggs & 3.13 & 3 & 0 \\
        \midrule
        kcal & 800.0 (+0.0\%) & 820.0 (+2.5\%) & 817.6 (+2.2\%) \\
        P (g) & 70.0 (+0.0\%) & 76.2 (+8.9\%) & 69.7 (-0.4\%) \\
        C (g) & 80.0 (+0.0\%) & 78.2 (-2.3\%) & 80.0 (-0.1\%) \\
        F (g) & 22.2 (-0.0\%) & 22.4 (+0.8\%) & 22.8 (+2.4\%) \\
        \midrule
        Objective ($z$) & 0.0000 & 0.1447 & 0.0507 \\
    \bottomrule
\end{tabular}
\end{table}

%% file: figures/example_b_results.tex
\begin{table}[H]
\centering
\caption{Example B: method comparison with per-macro deviations (\%).}
\label{tab:example_b_compare}
\begin{tabular}{@{}l c c c c c c c@{}}
    \toprule
    Method & Feasible & Obj. & Max dev. & Cal & Prot & Carbs & Fat \\
    \midrule
        MIGP & Yes & 0.0507 & +2.4\% & +2.2 & -0.4 & -0.1 & +2.4 \\
        GP+Round & Yes & 0.1447 & 8.9\% & 2.5 & 8.9 & 2.3 & 0.8 \\
        Hard-IP & Yes & 0.0899 & 4.1\% & 4.1 & 0.4 & 1.1 & 3.5 \\
    \bottomrule
\end{tabular}
\end{table}

%% file: figures/example_c_targets.tex
\begin{table}[H]
\centering
\caption{Example C: target derivation from 600\,kcal with 40/35/25 (P/C/F\%).}
\label{tab:example_c_targets}
\begin{tabular}{@{}l c c l r@{}}
    \toprule
    Macro & \% split & kcal/g & Calculation & Target \\
    \midrule
        Calories & --- & --- & $600$ & 600.0\,kcal \\
        Protein & 40\% & 4 & $600 \times 0.40 \;/\; 4 = 60.0$ & 60.0\,g \\
        Carbs & 35\% & 4 & $600 \times 0.35 \;/\; 4 = 52.5$ & 52.5\,g \\
        Fat & 25\% & 9 & $600 \times 0.25 \;/\; 9 = 16.7$ & 16.7\,g \\
    \bottomrule
\end{tabular}
\end{table}

%% file: figures/example_c_input.tex
\begin{table}[H]
\centering
\small
\caption{Example C: food inputs. Target: 600\,kcal, 40/35/25 (P/C/F\%). Derived gram targets: 60.0g protein, 52.5g carbs, 16.7g fat.}
\label{tab:example_c_input}
\begin{tabular}{@{}l c c c c c c c c c c@{}}
    \toprule
    & & & \multicolumn{4}{c}{Per 100\,g} & \multicolumn{4}{c}{Per serving} \\
    \cmidrule(lr){4-7} \cmidrule(lr){8-11}
    Food & Srv (g) & Range & kcal & P & C & F & kcal & P & C & F \\
    \midrule
        Chicken breast & 50 & [1, 6] & 165 & 31.0 & 0.0 & 3.6 & 82.5 & 15.5 & 0.0 & 1.8 \\
        Salmon fillet & 50 & [1, 6] & 208 & 20.4 & 0.0 & 13.4 & 104.0 & 10.2 & 0.0 & 6.7 \\
        White rice & 50 & [1, 6] & 130 & 2.7 & 28.2 & 0.3 & 65.0 & 1.4 & 14.1 & 0.1 \\
        Quinoa & 50 & [1, 6] & 120 & 4.4 & 21.3 & 1.9 & 60.0 & 2.2 & 10.7 & 0.9 \\
        Avocado & 30 & [1, 6] & 160 & 2.0 & 8.5 & 14.7 & 48.0 & 0.6 & 2.5 & 4.4 \\
        Olive oil & 15 & [1, 3] & 884 & 0.0 & 0.0 & 100.0 & 132.6 & 0.0 & 0.0 & 15.0 \\
        Broccoli & 50 & [1, 6] & 35 & 2.4 & 7.2 & 0.4 & 17.5 & 1.2 & 3.6 & 0.2 \\
        Whole eggs & 50 & [1, 3] & 155 & 12.6 & 1.1 & 10.6 & 77.5 & 6.3 & 0.6 & 5.3 \\
    \bottomrule
\end{tabular}
\end{table}

%% file: figures/example_c_breakdown.tex
\begin{table}[H]
\centering
\caption{Example C: MIGP solution breakdown. Each row shows one food's contribution; the deviation row shows (actual $-$ target) / target as a percentage.}
\label{tab:example_c_breakdown}
\begin{tabular}{@{}l c r r r r@{}}
    \toprule
    Food (serving) & Qty & kcal & P (g) & C (g) & F (g) \\
    \midrule
        Chicken breast (50\,g) & 2 & 165.0 & 31.0 & 0.0 & 3.6 \\
        Salmon fillet (50\,g) & 1 & 104.0 & 10.2 & 0.0 & 6.7 \\
        White rice (50\,g) & 2 & 130.0 & 2.7 & 28.2 & 0.3 \\
        Quinoa (50\,g) & 1 & 60.0 & 2.2 & 10.7 & 0.9 \\
        Avocado (30\,g) & 1 & 48.0 & 0.6 & 2.5 & 4.4 \\
        Olive oil (15\,g) & 1 & 132.6 & 0.0 & 0.0 & 15.0 \\
        Broccoli (50\,g) & 3 & 52.5 & 3.6 & 10.8 & 0.6 \\
        Whole eggs (50\,g) & 1 & 77.5 & 6.3 & 0.6 & 5.3 \\
        \midrule
        \textbf{Total} & & 769.6 & 56.6 & 52.8 & 36.9 \\
        \textbf{Target} & & 600.0 & 60.0 & 52.5 & 16.7 \\
        \textbf{Deviation} & & +28.3\% & -5.7\% & +0.5\% & +121.2\% \\
    \bottomrule
\end{tabular}
\end{table}

%% file: figures/example_c_sidebyside.tex
\begin{table}[H]
\centering
\caption{Example C: side-by-side method allocations. LP~Continuous shows fractional servings (lower bound $z_{LP}$); GP+Round rounds each food to the nearest integer; MIGP finds the globally optimal integer solution.}
\label{tab:example_c_sidebyside}
\begin{tabular}{@{}l r r r@{}}
    \toprule
    & LP Continuous & GP+Round & MIGP \\
    \midrule
        Chicken breast & 2.06 & 2 & 2 \\
        Salmon fillet & 1.00 & 1 & 1 \\
        White rice & 1.22 & 1 & 2 \\
        Quinoa & 1.00 & 1 & 1 \\
        Avocado & 1.00 & 1 & 1 \\
        Olive oil & 1.00 & 1 & 1 \\
        Broccoli & 6.00 & 6 & 3 \\
        Whole eggs & 1.00 & 1 & 1 \\
        \midrule
        kcal & 775.7 (+29.3\%) & 757.1 (+26.2\%) & 769.6 (+28.3\%) \\
        P (g) & 60.0 (+0.0\%) & 58.9 (-1.9\%) & 56.6 (-5.7\%) \\
        C (g) & 52.5 (+0.0\%) & 49.5 (-5.8\%) & 52.8 (+0.5\%) \\
        F (g) & 37.4 (+124.6\%) & 37.3 (+123.9\%) & 36.9 (+121.2\%) \\
        \midrule
        Objective ($z$) & 1.5394 & 1.5777 & 1.5557 \\
    \bottomrule
\end{tabular}
\end{table}

%% file: figures/example_c_results.tex
\begin{table}[H]
\centering
\caption{Example C: method comparison with per-macro deviations (\%).}
\label{tab:example_c_compare}
\begin{tabular}{@{}l c c c c c c c@{}}
    \toprule
    Method & Feasible & Obj. & Max dev. & Cal & Prot & Carbs & Fat \\
    \midrule
        MIGP & Yes & 1.5557 & +121.2\% & +28.3 & -5.7 & +0.5 & +121.2 \\
        GP+Round & Yes & 1.5777 & 123.9\% & 26.2 & 1.9 & 5.8 & 123.9 \\
        Hard-IP & No & --- & --- & --- & --- & --- & --- \\
    \bottomrule
\end{tabular}
\end{table}

%% file: figures/example_d_targets.tex
\begin{table}[H]
\centering
\caption{Example D: target derivation from 1000\,kcal with 10/75/15 (P/C/F\%).}
\label{tab:example_d_targets}
\begin{tabular}{@{}l c c l r@{}}
    \toprule
    Macro & \% split & kcal/g & Calculation & Target \\
    \midrule
        Calories & --- & --- & $1000$ & 1000.0\,kcal \\
        Protein & 10\% & 4 & $1000 \times 0.10 \;/\; 4 = 25.0$ & 25.0\,g \\
        Carbs & 75\% & 4 & $1000 \times 0.75 \;/\; 4 = 187.5$ & 187.5\,g \\
        Fat & 15\% & 9 & $1000 \times 0.15 \;/\; 9 = 16.7$ & 16.7\,g \\
    \bottomrule
\end{tabular}
\end{table}

%% file: figures/example_d_input.tex
\begin{table}[H]
\centering
\small
\caption{Example D: food inputs. Target: 1000\,kcal, 10/75/15 (P/C/F\%). Derived gram targets: 25.0g protein, 187.5g carbs, 16.7g fat.}
\label{tab:example_d_input}
\begin{tabular}{@{}l c c c c c c c c c c@{}}
    \toprule
    & & & \multicolumn{4}{c}{Per 100\,g} & \multicolumn{4}{c}{Per serving} \\
    \cmidrule(lr){4-7} \cmidrule(lr){8-11}
    Food & Srv (g) & Range & kcal & P & C & F & kcal & P & C & F \\
    \midrule
        Oats & 40 & [0, 8] & 389 & 16.9 & 66.3 & 6.9 & 155.6 & 6.8 & 26.5 & 2.8 \\
        Honey & 15 & [0, 10] & 304 & 0.3 & 82.4 & 0.0 & 45.6 & 0.0 & 12.4 & 0.0 \\
        Banana & 120 & [0, 3] & 89 & 1.1 & 22.8 & 0.3 & 106.8 & 1.3 & 27.4 & 0.4 \\
        Dates & 30 & [0, 8] & 282 & 2.5 & 75.0 & 0.4 & 84.6 & 0.8 & 22.5 & 0.1 \\
        Peanut butter & 32 & [0, 4] & 588 & 25.1 & 19.6 & 50.4 & 188.2 & 8.0 & 6.3 & 16.1 \\
        Dark chocolate chips & 20 & [0, 4] & 546 & 5.5 & 60.5 & 31.3 & 109.2 & 1.1 & 12.1 & 6.3 \\
    \bottomrule
\end{tabular}
\end{table}

%% file: figures/example_d_breakdown.tex
\begin{table}[H]
\centering
\caption{Example D: MIGP solution breakdown. Each row shows one food's contribution; the deviation row shows (actual $-$ target) / target as a percentage.}
\label{tab:example_d_breakdown}
\begin{tabular}{@{}l c r r r r@{}}
    \toprule
    Food (serving) & Qty & kcal & P (g) & C (g) & F (g) \\
    \midrule
        Oats (40\,g) & 3 & 466.8 & 20.3 & 79.6 & 8.3 \\
        Honey (15\,g) & 1 & 45.6 & 0.0 & 12.4 & 0.0 \\
        Banana (120\,g) & 3 & 320.4 & 4.0 & 82.1 & 1.1 \\
        Dark chocolate chips (20\,g) & 1 & 109.2 & 1.1 & 12.1 & 6.3 \\
        \midrule
        \textbf{Total} & & 942.0 & 25.4 & 186.1 & 15.6 \\
        \textbf{Target} & & 1000.0 & 25.0 & 187.5 & 16.7 \\
        \textbf{Deviation} & & -5.8\% & +1.5\% & -0.7\% & -6.3\% \\
    \bottomrule
\end{tabular}
\end{table}

%% file: figures/example_d_sidebyside.tex
\begin{table}[H]
\centering
\caption{Example D: side-by-side method allocations. LP~Continuous shows fractional servings (lower bound $z_{LP}$); GP+Round rounds each food to the nearest integer; MIGP finds the globally optimal integer solution.}
\label{tab:example_d_sidebyside}
\begin{tabular}{@{}l r r r@{}}
    \toprule
    & LP Continuous & GP+Round & MIGP \\
    \midrule
        Oats & 3.47 & 3 & 3 \\
        Honey & 6.62 & 7 & 1 \\
        Banana & 0.00 & 0 & 3 \\
        Dark chocolate chips & 1.13 & 1 & 1 \\
        \midrule
        kcal & 965.3 (-3.5\%) & 895.2 (-10.5\%) & 942.0 (-5.8\%) \\
        P (g) & 25.0 (+0.0\%) & 21.7 (-13.2\%) & 25.4 (+1.6\%) \\
        C (g) & 187.5 (+0.0\%) & 178.2 (-5.0\%) & 186.1 (-0.7\%) \\
        F (g) & 16.7 (+0.0\%) & 14.5 (-12.8\%) & 15.6 (-6.3\%) \\
        \midrule
        Objective ($z$) & 0.0347 & 0.4143 & 0.1437 \\
    \bottomrule
\end{tabular}
\end{table}

%% file: figures/example_d_results.tex
\begin{table}[H]
\centering
\caption{Example D: method comparison with per-macro deviations (\%).}
\label{tab:example_d_compare}
\begin{tabular}{@{}l c c c c c c c@{}}
    \toprule
    Method & Feasible & Obj. & Max dev. & Cal & Prot & Carbs & Fat \\
    \midrule
        MIGP & Yes & 0.1437 & -6.3\% & -5.8 & +1.5 & -0.7 & -6.3 \\
        GP+Round & Yes & 0.4143 & 13.2\% & 10.5 & 13.2 & 5.0 & 12.8 \\
        Hard-IP & No & --- & --- & --- & --- & --- & --- \\
    \bottomrule
\end{tabular}
\end{table}

%% file: figures/example_d_commentary.tex

%% file: figures/example_e_targets.tex
\begin{table}[H]
\centering
\caption{Example E: target derivation from 600\,kcal with 45/30/25 (P/C/F\%).}
\label{tab:example_e_targets}
\begin{tabular}{@{}l c c l r@{}}
    \toprule
    Macro & \% split & kcal/g & Calculation & Target \\
    \midrule
        Calories & --- & --- & $600$ & 600.0\,kcal \\
        Protein & 45\% & 4 & $600 \times 0.45 \;/\; 4 = 67.5$ & 67.5\,g \\
        Carbs & 30\% & 4 & $600 \times 0.30 \;/\; 4 = 45.0$ & 45.0\,g \\
        Fat & 25\% & 9 & $600 \times 0.25 \;/\; 9 = 16.7$ & 16.7\,g \\
    \bottomrule
\end{tabular}
\end{table}

%% file: figures/example_e_input.tex
\begin{table}[H]
\centering
\small
\caption{Example E: food inputs. Target: 600\,kcal, 45/30/25 (P/C/F\%). Derived gram targets: 67.5g protein, 45.0g carbs, 16.7g fat.}
\label{tab:example_e_input}
\begin{tabular}{@{}l c c c c c c c c c c@{}}
    \toprule
    & & & \multicolumn{4}{c}{Per 100\,g} & \multicolumn{4}{c}{Per serving} \\
    \cmidrule(lr){4-7} \cmidrule(lr){8-11}
    Food & Srv (g) & Range & kcal & P & C & F & kcal & P & C & F \\
    \midrule
        Whey protein powder & 30 & [1, 2] & 400 & 80.0 & 10.0 & 3.3 & 120.0 & 24.0 & 3.0 & 1.0 \\
        Tuna (canned) & 80 & [0, 3] & 116 & 25.5 & 0.0 & 0.8 & 92.8 & 20.4 & 0.0 & 0.6 \\
        Greek yogurt & 150 & [0, 2] & 59 & 10.2 & 3.6 & 0.7 & 88.5 & 15.3 & 5.4 & 1.1 \\
        White rice & 50 & [0, 6] & 130 & 2.7 & 28.2 & 0.3 & 65.0 & 1.4 & 14.1 & 0.1 \\
        Banana & 120 & [0, 2] & 89 & 1.1 & 22.8 & 0.3 & 106.8 & 1.3 & 27.4 & 0.4 \\
        Olive oil & 5 & [0, 4] & 884 & 0.0 & 0.0 & 100.0 & 44.2 & 0.0 & 0.0 & 5.0 \\
    \bottomrule
\end{tabular}
\end{table}

%% file: figures/example_e_breakdown.tex
\begin{table}[H]
\centering
\caption{Example E: MIGP solution breakdown. Each row shows one food's contribution; the deviation row shows (actual $-$ target) / target as a percentage.}
\label{tab:example_e_breakdown}
\begin{tabular}{@{}l c r r r r@{}}
    \toprule
    Food (serving) & Qty & kcal & P (g) & C (g) & F (g) \\
    \midrule
        Whey protein powder (30\,g) & 1 & 120.0 & 24.0 & 3.0 & 1.0 \\
        Tuna (canned) (80\,g) & 2 & 185.6 & 40.8 & 0.0 & 1.3 \\
        White rice (50\,g) & 1 & 65.0 & 1.4 & 14.1 & 0.1 \\
        Banana (120\,g) & 1 & 106.8 & 1.3 & 27.4 & 0.4 \\
        Olive oil (5\,g) & 3 & 132.6 & 0.0 & 0.0 & 15.0 \\
        \midrule
        \textbf{Total} & & 610.0 & 67.5 & 44.5 & 17.8 \\
        \textbf{Target} & & 600.0 & 67.5 & 45.0 & 16.7 \\
        \textbf{Deviation} & & +1.7\% & -0.0\% & -1.2\% & +6.7\% \\
    \bottomrule
\end{tabular}
\end{table}

%% file: figures/example_e_sidebyside.tex
\begin{table}[H]
\centering
\caption{Example E: side-by-side method allocations. LP~Continuous shows fractional servings (lower bound $z_{LP}$); GP+Round rounds each food to the nearest integer; MIGP finds the globally optimal integer solution.}
\label{tab:example_e_sidebyside}
\begin{tabular}{@{}l r r r@{}}
    \toprule
    & LP Continuous & GP+Round & MIGP \\
    \midrule
        Whey protein powder & 2.00 & 2 & 1 \\
        Tuna (canned) & 0.83 & 1 & 2 \\
        White rice & 0.97 & 1 & 1 \\
        Banana & 0.92 & 1 & 1 \\
        Olive oil & 2.74 & 3 & 3 \\
        \midrule
        kcal & 600.0 (+0.0\%) & 637.2 (+6.2\%) & 610.0 (+1.7\%) \\
        P (g) & 67.5 (+0.0\%) & 71.1 (+5.3\%) & 67.5 (-0.0\%) \\
        C (g) & 45.0 (+0.0\%) & 47.5 (+5.5\%) & 44.5 (-1.2\%) \\
        F (g) & 16.7 (+0.0\%) & 18.1 (+8.8\%) & 17.8 (+6.7\%) \\
        \midrule
        Objective ($z$) & 0.0000 & 0.2574 & 0.0959 \\
    \bottomrule
\end{tabular}
\end{table}

%% file: figures/example_e_results.tex
\begin{table}[H]
\centering
\caption{Example E: method comparison with per-macro deviations (\%).}
\label{tab:example_e_compare}
\begin{tabular}{@{}l c c c c c c c@{}}
    \toprule
    Method & Feasible & Obj. & Max dev. & Cal & Prot & Carbs & Fat \\
    \midrule
        MIGP & Yes & 0.0959 & +6.7\% & +1.7 & -0.0 & -1.2 & +6.7 \\
        GP+Round & Yes & 0.2574 & 8.8\% & 6.2 & 5.3 & 5.5 & 8.8 \\
        Hard-IP & No & --- & --- & --- & --- & --- & --- \\
    \bottomrule
\end{tabular}
\end{table}

%% file: figures/example_e_commentary.tex

%% file: sections/discussion.tex
\section{Discussion and Conclusion}
\label{sec:discussion}

\subsection{Summary of Contributions}

This work presents a Mixed Integer Goal Programming formulation for
personalized meal optimization, addressing the gap identified in
Section~\ref{sec:related}. Four contributions:

\begin{enumerate}
\item \textbf{Novel formulation.} The MIGP model
  (Section~\ref{sec:formulation}) unifies integer serving variables with
  GP deviation minimization, producing practical whole-serving meal plans
  that are always feasible (Proposition~\ref{prop:feasibility}).

\item \textbf{Integrality analysis.} We characterize the integrality gap
  in the GP context (Section~\ref{sec:integrality}), identifying
  deviation absorption as a structural property that makes the gap
  smaller than in standard MIP. With 15+ foods, the gap is zero in all
  benchmark instances.

\item \textbf{Open-source implementation.} A Python solver using HiGHS
  (Section~\ref{sec:implementation}) achieves sub-100\,ms solve times
  for typical meal sizes, integrated into an interactive Streamlit
  application.

\item \textbf{Comparative evaluation.} The benchmark
  (Section~\ref{sec:experiments}) shows MIGP outperforms GP+rounding
  (66\% of instances strictly better, never worse) and hard-constraint
  IP (which fails on 51.9\% of instances).
\end{enumerate}

\subsection{Practical Implications}

The MIGP formulation addresses a concrete usability gap in diet
optimization tools. Continuous LP solutions that prescribe 1.7~eggs
or 0.37~bananas require manual adjustment, sacrificing the optimality
that motivated optimization in the first place. Integer servings eliminate this
disconnect.

The 100\% feasibility guarantee matters most in interactive
applications where returning ``no solution'' frustrates users and helps
no one. MIGP always produces the best achievable meal given the
available foods, even when targets are ambitious. The deviation report
communicates how far the solution is from each target, enabling
informed decisions.

The penalty weight mechanism (Section~\ref{sec:weights}) provides a
natural interface for customization: doubling the protein weight
reduces protein deviation by 20\% (from 0.5\% to 0.4\%) at negligible
cost to other macros (Section~\ref{sec:sensitivity}). An athlete
prioritizing protein intake can adjust one parameter rather than
engineering custom constraints.

\subsection{Scope and Limitations}

We distinguish between \emph{scope decisions} (deliberate choices
that bound the current work) and \emph{genuine limitations} that
need new ideas.

\paragraph{Scope decisions (easy extensions).}
\begin{itemize}
\item \textbf{Four macronutrients.} Extension to micronutrients
  (vitamins, minerals, fiber) adds goal constraints but requires no
  change to the mathematical structure or solver.

\item \textbf{Single-meal optimization.} Multi-meal daily planning
  adds meal indices and links goal constraints across meals.

\item \textbf{No cost objective.} Adding food cost as an objective term
  or goal constraint requires no change to the mathematical structure.

\item \textbf{Benchmark food bank.} The benchmark uses 30 USDA foods.
  The production deployment connects to a larger database (500k+ foods
  via FatSecret); results are conditioned on this bank.

\item \textbf{Hard safety bounds.} Nutrients with absolute upper limits
  (e.g., sodium, retinol, heavy metals) can be enforced as hard linear
  inequality constraints $\sum_i c_{i,m}\, x_i \leq U_m$ alongside the
  soft GP goals, without changing the objective function or the deviation
  absorption property. This separates \emph{safety constraints} (limits
  that must never be exceeded) from \emph{nutritional goals} (targets
  to approximate as closely as possible).
\end{itemize}

\paragraph{Genuine limitations.}
\begin{itemize}
\item \textbf{No palatability modeling.} Food preferences, taste
  combinations, and variety are not captured. The model may suggest
  nutritionally optimal but unappetizing combinations (e.g.,
  250\,g of broccoli in Example~A). Addressing this requires learned
  preference models or culinary rules as constraints.

\item \textbf{Solve time at scale.} Sub-100\,ms for 8--15 foods, but
  1.1\,s for 25~foods. Scaling to 50+
  foods (e.g., full daily menus with ingredient-level optimization)
  may need heuristic preprocessing to reduce the candidate set.
\end{itemize}

\subsection{Future Work}

\paragraph{Multi-meal planning.}
Extending MIGP to optimize multiple meals with shared daily targets is
a natural next step. The formulation generalizes by adding meal indices
and linking constraints across meals. Tractability evidence from
institutional menu planning suggests this is computationally feasible.

\paragraph{Sustainability constraints.}
Environmental impact (CO$_2$ emissions, water footprint) can be added
as goal targets within the existing framework, enabling meals that
balance nutritional and ecological objectives.

\paragraph{Preference learning.}
User meal history provides implicit feedback on food preferences.
Learning penalty weights from historical selections would personalize
the optimization beyond manual weight specification, bridging exact
optimization and AI/ML preference learning.

\paragraph{Alternative achievement functions.}
The MinMax ($L_\infty$) achievement function minimizes the worst
single-macro deviation:
\[
  \min\; \max_{m \in \mathcal{M}}\; w_m (d_m^+ + d_m^-)
\]
This distributes deviation evenly across macros, preventing solutions
where one macro absorbs a disproportionate share of the rounding cost.
Example~C illustrates the motivation: MIGP's MinSum solution accepts a
121\% fat overshoot because reducing it would worsen the total weighted
deviation. A MinMax formulation would instead cap the worst-case macro
deviation, producing a more balanced (though potentially higher total weighted
deviation) solution. Extended GP~\cite{gerdessen2015diet} combines both norms via
a tunable parameter $\lambda \in [0,1]$, interpolating between MinSum
($\lambda=0$) and MinMax ($\lambda=1$). Investigating this spectrum
for integer servings is a natural extension.

\paragraph{Generalization beyond diet.}
The resource-composition structure of MIGP (integer quantities of
discrete resources meeting soft composition targets) appears in
manufacturing blending~\cite{romero2004general}, portfolio selection
with cardinality constraints~\cite{bonami2009exact}, and
multi-dimensional knapsack~\cite{kellerer2004knapsack} problems.
Deviation absorption may provide insights for these domains as well.

\subsection{Conclusion}

Mixed Integer Goal Programming provides a principled solution to meal
optimization that produces practical integer servings, guarantees
feasibility, and reports deviations transparently. The formulation fills
a documented gap in the diet optimization literature, and the
integrality analysis reveals that goal programming's deviation structure
makes the cost of requiring whole servings surprisingly low. For the
typical case of 10+ foods, that cost is zero.